\def\eqref#1{equation~\ref{#1}}
\def\1{\bm{1}}
\def\vr{{\bm{r}}}
\def\vx{{\bm{x}}}
\DeclareMathAlphabet{\mathsfit}{\encodingdefault}{\sfdefault}{m}{sl}
\SetMathAlphabet{\mathsfit}{bold}{\encodingdefault}{\sfdefault}{bx}{n}
\newcommand{\R}{\mathbb{R}}
\definecolor{C0}{RGB}{31,119,180}
\definecolor{C1}{RGB}{255,127,14}
\definecolor{C2}{RGB}{44,160,44}
\definecolor{C3}{RGB}{214,39,40}
\definecolor{C4}{RGB}{148,103,189}
\definecolor{C5}{RGB}{140,86,75}
\definecolor{C6}{RGB}{227,119,194}
\definecolor{C7}{RGB}{127,127,127}
\definecolor{C8}{RGB}{188,189,34}
\definecolor{C9}{RGB}{23,190,207}
\renewcommand{\eqref}[1]{(\ref{#1})}
\theoremstyle{plain}
\newtheorem{theorem}{Theorem}
\newtheorem*{theorem*}{Theorem}
\newtheorem{lemma}[theorem]{Lemma}
\theoremstyle{definition}
\newtheorem{definition}[theorem]{Definition}
\theoremstyle{remark}
\title{Number Theoretic Accelerated Learning of Physics-Informed Neural Networks}
\author {
    Takashi Matsubara\textsuperscript{\rm 1},
    Takaharu Yaguchi\textsuperscript{\rm 2}
}
\begin{document}

\maketitle

\begin{abstract}
Physics-informed neural networks solve partial differential equations by training neural networks.
Since this method approximates infinite-dimensional PDE solutions with finite collocation points, minimizing discretization errors by selecting suitable points is essential for accelerating the learning process.
Inspired by number theoretic methods for numerical analysis, we introduce good lattice training and periodization tricks, which ensure the conditions required by the theory.
Our experiments demonstrate that GLT requires 2--7 times fewer collocation points, resulting in lower computational cost, while achieving competitive performance compared to typical sampling methods.
\end{abstract}

%

\section{Introduction}
Many real-world phenomena can be modeled as partial differential equations (PDEs), and solving PDEs has been a central topic in computational science.
The applications include, but are not limited to, weather forecasting, vehicle design~\cite{Hirsch2006}, economic analysis~\cite{Achdou2014}, and computer vision~\cite{Logan2015}.
A PDE is expressed as $\mathcal{N}[u]=0$, where $\mathcal{N}$ is a (possibly nonlinear) differential operator, and $u:\Omega\rightarrow\R$ is an unknown function on the domain $\Omega\subset\R^s$.
For most PDEs that appear in physical simulations, the well-posedness (the uniqueness of the solution $u$ and the continuous dependence on the initial and boundary conditions) has been well-studied and is typically guaranteed under certain conditions.
To solve PDEs, various computational techniques have been explored, including finite difference methods, finite volume methods, and spectral methods~\cite{Furihata2010,Morton2005,Thomas1995}.
However, the development of computer architecture has become slower, leading to a growing need for computationally efficient alternatives.
A promising approach is physics-informed neural networks (PINNs)~\cite{Raissi2019}, which train a neural network by minimizing the physics-informed loss~\cite{Wang2022b,Wang2023}.
This is typically the squared error of the neural network's output $\tilde{u}$ from the PDE $\mathcal{N}[u]=0$ averaged over a finite set of collocation points $\bm{x}_j$, $\frac{1}{N}\sum_{j=0}^{N-1}\|\mathcal{N}[\tilde{u}](\bm{x}_j)\|^2$, encouraging the output $\tilde{u}$ to satisfy the equation $\mathcal{N}[\tilde{u}](\bm{x}_j)=0$.

\begin{figure}[t]
  \centering
  \footnotesize
  \tabcolsep=0mm
  \begin{tabular}{ccc}
    \hspace*{3mm}\includegraphics[scale=0.8]{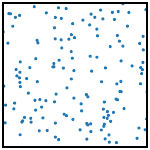}\hspace*{3mm} &
    \hspace*{3mm}\includegraphics[scale=0.8]{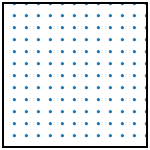} \hspace*{3mm} &
    \hspace*{3mm}\includegraphics[scale=0.8]{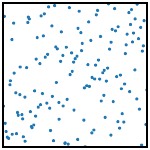}\hspace*{3mm}   \\[-0.7mm]
    uniformly random                                                             &
    uniformly spaced                                                             &
    LHS                                                                            \\[1.mm]
    \hspace*{3mm}\includegraphics[scale=0.8]{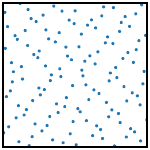}\hspace*{3mm} &
    \hspace*{3mm}\includegraphics[scale=0.8]{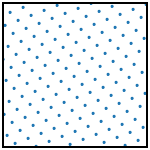}\hspace*{3mm} &
    \hspace*{3mm}\includegraphics[scale=0.8]{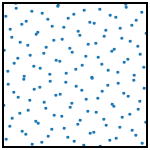}\hspace*{3mm}  \\[-0.7mm]
    Sobol sequence                                                               &
    proposed GLT                                                                 &
    \hspace*{-3mm}proposed GLT (folded)\hspace*{-3mm}                              \\
  \end{tabular}
  \caption{Examples of sampled collocation points.
    128 points for the Sobol sequence, and 144 points for others.}
  \label{fig:collocation}
\end{figure}

However, the solutions $u$ to PDEs are inherently infinite-dimensional, and any distance involving the output $\tilde{u}$ or the solution $u$ needs to be defined by an integral over the domain $\Omega$.
In this regard, the physics-informed loss serves as a finite approximation to the squared 2-norm $|\mathcal{N}[\tilde{u}]|^2_2=\int_{\bm{x}\in\Omega} \|\mathcal{N}[\tilde{u}](\bm{x})\|^2\mathrm{d}\bm{x}$ on the function space $L^2(\Omega)$ for $\mathcal{N}[u]\in L^2(\Omega)$, and hence the discretization errors should affect the training efficiency.
A smaller number $N$ of collocation points leads to a less accurate approximation and inferior performance, while a larger number $N$ increases the computational cost~\cite{Bihlo2022,Sharma2022}.
Despite the importance of selecting appropriate collocation points, insufficient emphasis has been placed on this aspect.
\citet{Raissi2019}, \citet{Zeng2023}, and many other studies employed Latin hypercube sampling (LHS) to determine the collocation points.
Alternative approaches include uniformly random sampling (i.e., the Monte Carlo method)~\cite{Jin2021,Krishnapriyan2022} and uniformly spaced sampling~\cite{Wang2021c,Wang2022a}.
These methods are exemplified in Fig.~\ref{fig:collocation}.

In the field of numerical analysis, the relationship between integral approximation and collocation points has been extensively investigated.
Accordingly, some studies have used quasi-Monte Carlo methods, specifically the Sobol sequence, which approximate integrals more accurately than the Monte Carlo method~\cite{Lye2020,Longo2021,Mishra2021}.
For further improvement, this paper proposes \emph{good lattice training (GLT)} for PINNs and their variants, such as the competitive PINN (CPINN)~\cite{Zeng2023} and physics-informed neural operator~\cite{Li2021e,Rosofsky2023}.
The GLT is inspired by number theoretic methods for numerical analysis, providing an optimal set of collocation points depending on the initial and boundary conditions, as shown in Fig.~\ref{fig:collocation}.
Our experiments demonstrate that the proposed GLT requires far fewer collocation points than comparison methods while achieving similar errors, significantly reducing computational cost.
The contribution and significance of the proposed GLT are threefold.

\vspace*{0.5mm}\noindent\textbf{Computationally Efficient:}
The proposed GLT offers an optimal set of collocation points to compute a loss function that can be regarded as a finite approximation to an integral over the domain, such as the physics-informed loss, if the activation functions of the neural networks are smooth enough.
It requires significantly fewer collocation points to achieve accuracy of solutions and system identifications comparable to other methods, or can achieve lower errors with the same computational budget.

\vspace*{0.5mm}\noindent\textbf{Applicable to PINNs Variants:}
As the proposed GLT changes only the collocation points, it can be applied to various variants of PINNs without modifying the learning algorithm or objective function.
In this study, we investigate a specific variant, the CPINNs~\cite{Zeng2023}, and demonstrate that CPINNs using the proposed GLT achieve superior convergence speed with significantly fewer collocation points than CPINNs using LHS.

\vspace*{0.5mm}\noindent\textbf{Theoretically Solid:}
Number theory provides a theoretical basis for the efficacy of the proposed GLT.
Existing methods based on quasi-Monte Carlo methods are inferior to the proposed GLT in theoretical performance, or at least require the prior knowledge about the smoothness $\alpha$ of the solution $u$ and careful adjustments of hyperparameters~\cite{Longo2021}.
On the other hand, the proposed GLT is free from these prior knowledge or adjustments and achieves better performances depending on the smoothness $\alpha$ and the neural network, which is a significant advantage.

\section{Related Work}
Neural networks are a powerful tool for processing information and have achieved significant success in various fields~\cite{He2015a,Vaswani2017}, including black-box system identification, that is, to learn the dynamics of physical phenomena from data and predict their future behaviors~\cite{Chen2018e,Chen1990,Wang1998}.
By integrating knowledge from analytical mechanics, neural networks can learn dynamics that adheres to physical laws and even uncover these laws from data~\cite{Finzi2020,Greydanus2019,Matsubara2023ICLR}.

Neural networks have also gained attention as computational tools for solving differential equations, particularly PDEs~\cite{Dissanayake1994,Lagaris1998}.
Recently, \citet{Raissi2019} introduced an elegant refinement to this approach and named it PINNs.
The key concept behind PINNs is the physics-informed loss~\cite{Wang2022b,Wang2023}.
This loss function evaluates the extent to which the output $\tilde{u}$ of the neural network satisfies a given PDE $\mathcal{N}[u]=0$ and its associated initial and boundary conditions $\mathcal B[u]=0$.
The physics-informed loss can be integrated into other models like DeepONet~\cite{Lu2021a,Wang2023} or used for white-box system identifications (that is, adjusting the parameters of known PDEs so that their solutions fit observations).

PINNs are applied to various PDEs~\cite{Bihlo2022,Jin2021,Mao2020}, with significant efforts in improving learning algorithms and objective functions~\cite{Hao2023,Heldmann2023,Lu2022,Pokkunuru2023,Sharma2022,Zeng2023}.
Objective functions are generally based on PDEs evaluated at a finite set of collocation points rather than data.
\citet{Bihlo2022,Sharma2022} have shown a trade-off between the number of collocation points (and hence computational cost) and the accuracy of the solution.
Thus, selecting collocation points that efficiently cover the entire domain $\Omega$ is essential for achieving better results.
Some studies have employed quasi-Monte Carlo methods, specifically the Sobol sequence, to determine the collocation points~\cite{Lye2020,Mishra2021}, but their effectiveness depends on knowledge of the solution's smoothness $\alpha$ and hyperparameter adjustments~\cite{Longo2021}.

\section{Method}
\paragraph{Theoretical Error Estimate of PINNs}
For simplicity, we consider PDEs defined on an $s$-dimensional unit cube $[0, 1]^s$.
PINNs employ a PDE that describes the target physical phenomena as loss functions.
Specifically, first, an appropriate set of collocation points $L^*=\{ \bm{x}_j \mid j=0,\ldots,N-1 \}$ is determined, and then the sum of the residuals of the PDE at these points
\begin{equation}\label{eq:dloss}
  \textstyle \frac{1}{N}\sum_{j=0}^{N-1} \mathcal{P}[\tilde u](\bm{x}_j)=\frac{1}{N}\sum_{\vx_j\in L^*} \mathcal{P}[\tilde u](\bm{x}_j),
\end{equation}
is minimized as a loss function, where $\mathcal{P}$ is a differential operator.
The physics-informed loss satisfies $\mathcal{P}[\tilde u](\bm{x})=\| \mathcal{N}[\tilde{u}](\bm{x}) \|^2$.
Then, the neural network's output $\tilde{u}$ becomes an approximate solution of the PDE.
However, for $\tilde{u}$ to be the exact solution, the loss function should be 0 for all $\bm{x}\in\Omega$, not just at collocation points.
Therefore, the following integral must be minimized as the loss function:
\begin{equation}\label{eq:closs}
  \textstyle \int_{[0, 1]^s} \mathcal{P}[\tilde{u}](\bm{x}) \mathrm{d}\bm{x}.
\end{equation}
In other words, the practical minimization of \eqref{eq:dloss} essentially minimizes the approximation of \eqref{eq:closs} with the expectation that {\eqref{eq:closs}} will be small enough, and hence $\tilde{u}$ becomes an accurate approximation to the exact solution.

More precisely, we show the following theorem, which is an improvement of an existing error analysis \cite{Mishra2023estimates} in the sense that the approximation error bound of neural networks is considered.
\begin{theorem}\label{thm:3.1}
  Suppose that the class of neural networks used for PINNs includes an $\varepsilon_1$-approximator $\tilde{u}_\mathrm{opt}$ to the exact solution $u^*$ to the PDE $\mathcal{N}[u]=0$:
  $\|u^* - \tilde{u}_\mathrm{opt} \| \leq \varepsilon_1$,
  and that \eqref{eq:dloss} is an $\varepsilon_2$-approximation of $\eqref{eq:closs}$ for the approximated solution $\tilde{u}$ and also for $\tilde{u}_\mathrm{opt}$:
  $| \textstyle \int_{[0, 1]^s} \mathcal{P}[{u}](\bm{x}) \mathrm{d}\bm{x} - \frac{1}{N}\sum_{\vx_j\in L^*} \mathcal{P}[u](\bm{x}_j)| \leq \varepsilon_2$
  for $u=\tilde{u}$ and $u=\tilde{u}_\mathrm{opt}$.
  Suppose also that there exist $c_\mathrm{p} > 0$ and $c_\mathrm{L} > 0$ such that
  $
    \frac{1}{c_\mathrm{p}} \|u - v\|
    \leq \| \mathcal{N}[u] - \mathcal{N}[v] \|
    \leq c_\mathrm{L} \| u - v \|.
  $
  Then,
  \begin{equation*}
    \textstyle
    \|u^* - \tilde{u}\|
    \leq (1 + c_\mathrm{p} c_\mathrm{L}) \varepsilon_1
    + c_\mathrm{p} \sqrt{\frac{1}{N}\sum_{\vx_j\in L^*} \mathcal{P}[\tilde u](\bm{x}_j) + \varepsilon_2}.
  \end{equation*}
\end{theorem}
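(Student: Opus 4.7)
The plan is to split $\|u^{*}-\tilde u\|$ via the triangle inequality through the intermediate point $\tilde u_{\mathrm{opt}}$, pass to the operator side using the bilipschitz hypothesis, and then translate $\|\mathcal{N}[\tilde u]\|$ into the empirical residual sum through the identity $\|\mathcal{N}[\tilde u]\|^{2}=\int_{[0,1]^{s}}\mathcal{P}[\tilde u](\bm x)\,\mathrm{d}\bm x$ combined with the $\varepsilon_{2}$-quadrature hypothesis. Everything else is a second triangle inequality, the two Lipschitz bounds, and one monotonicity-of-$\sqrt{\cdot}$ step; no heavy machinery is required.

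Concretely, I first write $\|u^{*}-\tilde u\|\le\|u^{*}-\tilde u_{\mathrm{opt}}\|+\|\tilde u_{\mathrm{opt}}-\tilde u\|$ and absorb the first summand into $\varepsilon_{1}$ by hypothesis. For the second summand, the lower bilipschitz bound gives $\|\tilde u_{\mathrm{opt}}-\tilde u\|\le c_{\mathrm p}\|\mathcal{N}[\tilde u_{\mathrm{opt}}]-\mathcal{N}[\tilde u]\|$; inserting $\mathcal{N}[u^{*}]=0$ and applying the triangle inequality again splits the right-hand side into $c_{\mathrm p}\|\mathcal{N}[\tilde u_{\mathrm{opt}}]-\mathcal{N}[u^{*}]\|+c_{\mathrm p}\|\mathcal{N}[\tilde u]-\mathcal{N}[u^{*}]\|$. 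The first of these pieces is at most $c_{\mathrm p}c_{\mathrm L}\varepsilon_{1}$ by the upper Lipschitz bound together with $\|u^{*}-\tilde u_{\mathrm{opt}}\|\le\varepsilon_{1}$. The remaining piece $c_{\mathrm p}\|\mathcal{N}[\tilde u]\|$ I handle by invoking $\|\mathcal{N}[\tilde u]\|^{2}=\int_{[0,1]^{s}}\mathcal{P}[\tilde u](\bm x)\,\mathrm{d}\bm x\le\tfrac{1}{N}\sum_{\bm x_{j}\in L^{*}}\mathcal{P}[\tilde u](\bm x_{j})+\varepsilon_{2}$ (the $\varepsilon_{2}$-quadrature hypothesis for $\tilde u$) and taking a square root. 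Collecting the three contributions produces exactly $(1+c_{\mathrm p}c_{\mathrm L})\varepsilon_{1}+c_{\mathrm p}\sqrt{\tfrac{1}{N}\sum_{\bm x_{j}\in L^{*}}\mathcal{P}[\tilde u](\bm x_{j})+\varepsilon_{2}}$.

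The main subtlety, rather than an obstacle, is that one must route \emph{deliberately} through $\tilde u_{\mathrm{opt}}$: applying the lower bilipschitz bound directly to $u^{*}-\tilde u$ yields the sharper but less informative estimate $\|u^{*}-\tilde u\|\le c_{\mathrm p}\|\mathcal{N}[\tilde u]\|$, which drops the $(1+c_{\mathrm p}c_{\mathrm L})\varepsilon_{1}$ term altogether and therefore misses the whole point of the theorem, namely exposing the explicit dependence on the network's approximation capacity that the statement advertises as an improvement over the earlier analysis of Mishra and Molinaro. A secondary technical point is that the norm in the bilipschitz hypothesis must be the same $L^{2}$ norm for which $\mathcal{P}[\tilde u](\bm x)=\|\mathcal{N}[\tilde u](\bm x)\|^{2}$ squared; under this identification the square-root step commutes cleanly with the quadrature estimate and no $\sqrt{a+b}\le\sqrt{a}+\sqrt{b}$ manipulation is needed since $\varepsilon_{2}$ already sits under the square root in the final bound.
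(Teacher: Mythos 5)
Your proof is correct and follows essentially the same route as the paper's: triangle inequality through $\tilde u_\mathrm{opt}$, the lower bilipschitz bound, insertion of $\mathcal{N}[u^*]=0$ followed by another triangle inequality and the upper Lipschitz bound, and finally the $\varepsilon_2$-quadrature bound on $\|\mathcal{N}[\tilde u]\|^2$ and a square root. Your side remark---that applying the lower bilipschitz inequality directly to $u^*-\tilde u$ yields the sharper bound $c_\mathrm{p}\sqrt{\tfrac{1}{N}\sum_{\bm x_j\in L^*}\mathcal{P}[\tilde u](\bm x_j)+\varepsilon_2}$ with no $\varepsilon_1$ term at all---is a fair observation about the hypotheses as literally stated, though it does not affect the validity of the proof.
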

For a proof, see Appendix ``Theoretical Background.''
$\varepsilon_1$ is determined by the architecture of the network and the function space to which the solution belongs.
For example, approximation rates of neural networks in Sobolev spaces are given in \citet{GUHRING2021107}.
This explains why increasing the number of collocation points beyond a certain point does not further reduce the error.
$\varepsilon_2$ depends on the accuracy of the approximation of the integral.
In this paper, we investigate a training method that easily gives small $\varepsilon_2$.

One standard strategy often used in practice is to set $\bm{x}_j$'s to be uniformly distributed random numbers, which can be interpreted as the Monte Carlo approximation of the integral \eqref{eq:closs}.
As is widely known, the Monte Carlo method can approximate the integral within an error of $O(1/N^{\frac{1}{2}})$ independently from the number $s$ of dimensions~\cite{Sloan1994-cl}.
However, most PDEs for physical simulations are two to four-dimensional, incorporating a three-dimensional space and a one-dimensional time.
Hence, in this paper, we propose a sampling strategy specialized for low-dimensional cases, inspired by number-theoretic numerical analysis.

Note that some variants, such as CPINN~\cite{Zeng2023}, have proposed alternative objective functions.
We hereafter denote any variant of physics-informed loss by \eqref{eq:dloss}, without loss of generality, as long as it can be regarded as a finite approximation to an integral over a domain, \eqref{eq:closs}.

\paragraph{Good Lattice Training}
In this section, we propose the \emph{good lattice training (GLT)}, in which a number theoretic numerical analysis is used to accelerate the training of PINNs~\cite{Niederreiter1992-qb, Sloan1994-cl, Zaremba1972-gj}.
In the following, we use some tools from this theory.

While our target is a PDE on the unit cube $[0, 1]^s$, we now treat the loss function $\mathcal{P}[\tilde u]$ as periodic on $\R^s$ with a period of 1.
Then, we define a lattice.
\begin{definition}[\citet{Sloan1994-cl}]
  A lattice $L$ in $\mathbb{R}^s$ is defined as a finite set of points in $\mathbb{R}^s$ that is closed under addition and subtraction.
\end{definition}
Given a lattice $L$, the set of collocation points is defined as $L^*=\{\bm{x}_j\mid j=0,\dots,N-1\}\coloneqq\{\mbox{the decimal part of }\bm{x} \mid \bm{x} \in L \}\in[0,1]^s$.
Considering that the loss function to be minimized is \eqref{eq:closs}, it is desirable to determine the lattice $L$ (and hence the set of collocation points $\bm{x}_j$'s) so that the difference $| \eqref{eq:closs} - \eqref{eq:dloss}|$ of the two functions is minimized.

Suppose that $\varepsilon(\bm{x}) \coloneqq \mathcal{P}[\tilde u](\bm{x})$ is smooth enough, admitting the Fourier series expansion:
\begin{equation*}
  \textstyle \varepsilon(\bm{x}) \coloneqq \mathcal{P}[\tilde u](\bm{x})= \sum_{\bm{h}} \hat{\varepsilon}(\bm{h}) \exp(2 \pi \mathrm{i} \bm{h} \cdot \bm{x}),
\end{equation*}
where $\mathrm{i}$ denotes the imaginary unit and $\bm{h}=(h_1,h_2,\ldots,h_s)$ $\in \mathbb{Z}^s$.
Substituting this into \eqref{eq:dloss} yields
\begin{equation}
  \textstyle | \eqref{eq:closs} - \eqref{eq:dloss}| = \left|\frac{1}{N} \sum_{j=0}^{N-1}    \sum_{\bm{h} \in \mathbb{Z}^s, \bm{h} \neq 0} \hat{\varepsilon}(\bm{h}) \exp(2 \pi \mathrm{i}  \bm{h} \cdot \bm{x}_j)\right|,
  \label{eq:interror0}
\end{equation}
because the Fourier mode of $\bm{h}=0$ is equal to the integral $\int_{[0, 1]^s} \varepsilon(\bm{x}) \mathrm{d}\bm{x}$.
Before optimizing \eqref{eq:interror0}, the dual lattice of lattice $L$ and an insightful lemma are introduced as follows.
\begin{definition}[\citet{Zaremba1972-gj}]
  A dual lattice $L^\top$ of a lattice $L$ is defined as $L^\top \coloneqq \{ \bm{h} \in \mathbb{R}^s \mid \bm{h} \cdot \bm{x} \in \mathbb{Z}, \ \forall \bm{x} \in L \}$.
\end{definition}
\begin{lemma}[\citet{Zaremba1972-gj}]\label{lemma:lemma}
  For $\bm{h} \in \mathbb{Z}^s$, it holds that
  \begin{equation*}
    \textstyle \frac{1}{N} \sum_{j=0}^{N-1}
    \exp( 2 \pi \mathrm{i} \bm{h} \cdot \bm{x}_j)
    =\begin{cases}
      1 & (\bm{h} \in L^\top)   \\
      0 & (\mathrm{otherwise}.)
    \end{cases}
  \end{equation*}
\end{lemma}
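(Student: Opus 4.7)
The plan is to recognize this as the standard orthogonality-of-characters argument on the finite abelian group $L^*$, which is the projection of the lattice $L$ to the torus $\mathbb{R}^s/\mathbb{Z}^s$. The key structural fact to exploit is that $L^*$ inherits a group structure from $L$: since $L$ is closed under addition and subtraction by definition, and since $\bm{h}\in\mathbb{Z}^s$ implies $\exp(2\pi\mathrm{i}\bm{h}\cdot\bm{x})$ only depends on $\bm{x}$ modulo $\mathbb{Z}^s$, taking decimal parts produces a finite subgroup of the torus on which the map $\bm{x}\mapsto\exp(2\pi\mathrm{i}\bm{h}\cdot\bm{x})$ is a well-defined group character.

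First I would dispose of the case $\bm{h}\in L^\top$. By the definition of the dual lattice, $\bm{h}\cdot\bm{x}\in\mathbb{Z}$ for every $\bm{x}\in L$, and since $\bm{h}\in\mathbb{Z}^s$ this property descends to the decimal parts: $\bm{h}\cdot\bm{x}_j\in\mathbb{Z}$ for every $\bm{x}_j\in L^*$. Hence every summand equals $1$, giving the value $1$ claimed.

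For the case $\bm{h}\notin L^\top$ I would use the translation trick. By definition there exists some $\bm{y}\in L$ with $\bm{h}\cdot\bm{y}\notin\mathbb{Z}$; letting $\bm{x}_0$ be its decimal part, $\bm{h}\cdot\bm{x}_0$ differs from $\bm{h}\cdot\bm{y}$ by $\bm{h}\cdot\lfloor\bm{y}\rfloor\in\mathbb{Z}$, so $\zeta:=\exp(2\pi\mathrm{i}\bm{h}\cdot\bm{x}_0)\neq 1$ and $\bm{x}_0\in L^*$. Because $L^*$ is closed under addition modulo $1$, the map $\bm{x}_j\mapsto\bm{x}_j+\bm{x}_0\pmod 1$ is a bijection of $L^*$. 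Writing $S=\sum_{j=0}^{N-1}\exp(2\pi\mathrm{i}\bm{h}\cdot\bm{x}_j)$ and reindexing via this bijection gives
\[
S \;=\; \sum_{j=0}^{N-1}\exp\!\bigl(2\pi\mathrm{i}\bm{h}\cdot(\bm{x}_j+\bm{x}_0)\bigr) \;=\; \zeta\,S,
\]
where the $\!\pmod 1$ inside the exponential is free because $\bm{h}\in\mathbb{Z}^s$. Since $\zeta\neq 1$ this forces $S=0$, completing the argument.

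There is no substantial obstacle; the only point that warrants a sentence of care is the verification that $L^*$ is genuinely a group and that the translation step is legitimate, that is, that $(\bm{x}_j+\bm{x}_0)\bmod 1$ ranges over $L^*$ without repetition. Both follow immediately from the lattice being closed under addition/subtraction and from $\bm{h}$ being integral, so the reindexing is justified and the displayed identity $S=\zeta S$ yields the result.
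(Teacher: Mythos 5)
Your argument is correct and it is the standard one: treat $L^*$ as a finite abelian group under addition modulo $1$, observe that $\bm{x}\mapsto\exp(2\pi\mathrm{i}\,\bm{h}\cdot\bm{x})$ is a well-defined character on it (because $\bm{h}\in\mathbb{Z}^s$), and use the translation/reindexing trick to force $S=\zeta S$ with $\zeta\neq 1$ when $\bm{h}\notin L^\top$. For comparison, the paper does not actually supply a proof at all; it states only that the lemma ``follows directly from the properties of Fourier series'' and defers to Zaremba, so you have filled in exactly the content that the paper leaves implicit, and your route is the character-orthogonality argument that the cited references use.

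One small caution worth flagging, though it reflects a defect in the paper's Definition~4 rather than in your reasoning: as literally stated there, a ``lattice'' is a \emph{finite} set closed under addition and subtraction, which forces $L=\{\bm{0}\}$. You correctly read through to the intended meaning (an integration lattice $L\supseteq\mathbb{Z}^s$ with $L/\mathbb{Z}^s$ finite, so that the set $L^*$ of fractional parts is finite and inherits the group structure). Your step ``$L^*$ is closed under addition mod $1$'' is exactly the statement that the fractional-part map is a group homomorphism $L\to\mathbb{R}^s/\mathbb{Z}^s$, which requires $\mathbb{Z}^s\subseteq L$ so that reducing mod $1$ does not leave $L^*$; under the intended definition this holds, and then the translation by $\bm{x}_0$ is indeed a bijection of $L^*$, so the identity $S=\zeta S$ is justified and the proof is complete.
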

Lemma~\ref{lemma:lemma} follows directly from the properties of Fourier series.
Based on this lemma, we restrict the lattice point $L$ to the form $\{\bm{x} \mid \bm{x}=\frac{j}{N} \bm{z}\mbox{ for } j\in\mathbb{Z}\}$ with a fixed integer vector $\bm{z}$; the set $L^*$ of collocation points is $\{\mbox{the decimal part of\ }\frac{j}{N} \bm{z}\mid j=0,\ldots,N-1\}$.
Then, instead of searching $\bm{x}_j$'s, a vector $\bm{z}$ is searched.
By restricting to this form, $\bm{x}_j$'s can be obtained automatically from a given $\bm{z}$, and hence the optimal collocation points $\bm{x}_j$'s do not need to be stored as a table of numbers, making a significant advantage in implementation.
Another advantage is theoretical; the optimization problem of the collocation points can be reformulated in a number theoretic way. In fact, for $L$ as shown above, it is confirmed that $L^\top = \{ \bm{h} \mid \bm{h} \cdot \bm{z} \equiv 0 \pmod N \}$. If $\bm{h} \cdot \bm{z} \equiv 0 \pmod N$ then there exists an $m \in \mathbb{Z}$ such that $\bm{h} \cdot \bm{z} = m N$ and hence $\frac{j}{N}\bm{h} \cdot \bm{z} = m j \in \mathbb{Z}$. Conversely, if $\bm{h} \cdot \bm{z} \not\equiv 0 \pmod N$, clearly $\frac{1}{N}\bm{h} \cdot \bm{z} \notin \mathbb{Z}$.

From the above lemma,
\begin{equation}\label{eq:interror}
  \textstyle \eqref{eq:interror0}\le \sum_{\bm{h} \in \mathbb{Z}^s, \bm{h} \neq 0, \bm{h} \cdot \bm{z} \equiv 0 \pmod N} | \hat{\varepsilon}(\bm{h}) |,
\end{equation}
and hence the collocation points $\bm{x}_j$'s should be determined so that \eqref{eq:interror} becomes small.
This problem is a number theoretic problem in the sense that it is a minimization problem of finding an integer vector $\bm{h}$ subject to the condition $\bm{h} \cdot \bm{z}  \equiv 0 \pmod N$.
This problem has been considered in the field of number theoretic numerical analysis. In particular, optimal solutions have been investigated for integrands in the Korobov spaces, which are spaces of functions that satisfy a certain smoothness condition.
\begin{definition}[\citet{Zaremba1972-gj}]\label{definition:Korobov}
  The function space that is defined as $E_\alpha = \{ f:[0, 1]^s \to \R \mid {}^\exists c, |\hat{f}(\bm{h})| \leq \frac{c}{(\bar{h}_1 \bar{h}_2 \cdots \bar{h}_s)^\alpha}\}$ is called the Korobov space, where $\hat{f}(\bm{h})$ is the Fourier coefficients of $f$ and $\bar{k} = \max(1, |k|)$ for $k \in \mathbb{R}$.
\end{definition}
It is known that if $\alpha$ is an integer, for a function $f$ to be in $E_\alpha$, it is sufficient that $f$ has continuous partial derivatives $\frac{\partial^{q_1+ q_2 + \cdots + q_s}}{\partial_1^{q_1} \cdot \partial_2^{q_2} \cdots \partial_s^{q_s}}f, 0 \leq q_k \leq \alpha\ (k=1,\ldots,s)$. For example, if a function $f(x, y):\mathbb{R}^2 \to \mathbb{R}$ has continuous $f_x, f_y, f_{xy}$, then $f \in E_1$.
Hence, if $\mathcal{P}[\tilde{u}]$ and the neural network belong to Koborov space,
\begin{equation}\label{eq:palpha}
  \textstyle \eqref{eq:interror}\le \sum_{\bm{h} \in \mathbb{Z}^s, \bm{h} \neq 0, \bm{h} \cdot \bm{z} \equiv 0 \pmod N} \frac{c}{(\bar{h}_1 \bar{h}_2 \cdots \bar{h}_s)^\alpha}.
\end{equation}
Here, we introduce a theorem in the field of number theoretic numerical analysis:
\begin{theorem}[\citet{Sloan1994-cl}]\label{theorem:support}
  For integers $N \geq 2$ and $s \geq 2$, there exists a $\bm{z} \in \mathbb{Z}^s$ such that
  \begin{equation*}
    \textstyle P_\alpha(\bm{z}, N) \leq \frac{(2 \log N)^{\alpha s}}{N^\alpha} + O\left(\frac{(\log N)^{\alpha s -1}}{N^\alpha}\right).
  \end{equation*}
  for $P_\alpha(\bm{z}, N)=\frac{1}{(\bar{h}_1 \bar{h}_2 \cdots \bar{h}_s)^\alpha}$.
\end{theorem}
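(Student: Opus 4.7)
The plan is to prove existence of a good $\bm{z}$ by the probabilistic (averaging) method, a standard technique in the Korobov--Hlawka--Sloan theory of good lattice points. The minimum of $P_\alpha(\bm{z}, N)$ over a finite set $Z$ of admissible generators (typically $Z = \{\bm{z} \in \{1,\ldots,N-1\}^s : \gcd(z_i, N) = 1 \text{ for every } i\}$) is bounded above by its average, so it suffices to show that this average satisfies the claimed bound; any $\bm{z}\in Z$ realising a value at most the mean then gives the theorem.

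First I would exchange the order of summation. Writing $P_\alpha(\bm{z},N)$ as a sum over $\bm{h}$ with an indicator enforcing $\bm{h}\cdot\bm{z}\equiv 0\pmod N$, the mean $M(N)\coloneqq |Z|^{-1}\sum_{\bm{z}\in Z}P_\alpha(\bm{z},N)$ becomes
\begin{equation*}
M(N) \;=\; \sum_{\bm{h}\in\mathbb{Z}^s\setminus\{\bm{0}\}} \frac{\rho(\bm{h};N)}{(\bar{h}_1\cdots\bar{h}_s)^\alpha},\qquad \rho(\bm{h};N)\;=\;\frac{|\{\bm{z}\in Z:\bm{h}\cdot\bm{z}\equiv 0\pmod N\}|}{|Z|}.
\end{equation*}
An elementary character-sum argument over $(\mathbb{Z}/N\mathbb{Z})^{\times}$ gives $\rho(\bm{h};N)\le c\,\gcd(h_1,\ldots,h_s,N)/N$, up to lower-order corrections arising from the coprimality constraint defining $Z$.

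Next I would split the $\bm{h}$-sum according to whether $N$ divides every $h_i$. The ``aliased'' contribution, with $N\mid h_i$ for all $i$, factors out $N^{-\alpha s}$ and collapses to a bounded zeta-like series, which is small enough to be absorbed into the error term. For the bulk contribution, the $1/N$ density factor supplied by $\rho$ effectively truncates each coordinate to the range $1\le |h_i|\le N/2$, after which the sum factorises across coordinates into an $s$-fold product of one-dimensional sums. Standard partial-sum estimates of the form $\sum_{1\le|h|\le N/2}|h|^{-1}\le 2\log N+O(1)$, combined with the Korobov lifting inequality relating $P_\alpha$ to a power of a $P_1$-type quantity, then yield the leading coefficient $(2\log N)^{\alpha s}$ together with the stated $O((\log N)^{\alpha s - 1}/N^\alpha)$ remainder.

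The main obstacle is the careful bookkeeping of all subleading terms, particularly when lifting from $\alpha=1$ to general integer $\alpha$: one must verify that the aliased contribution, the coprimality correction in $\rho$, and the one-dimensional truncation errors are collectively dominated by a single $O((\log N)^{\alpha s - 1}/N^\alpha)$ term, uniformly in the relevant parameters. This is a classical but delicate calculation (see \citet{Sloan1994-cl,Niederreiter1992-qb}), combining sharp partial-sum estimates with a M\"obius-type inclusion--exclusion accounting for the restriction $\gcd(z_i,N)=1$ that defines $Z$.
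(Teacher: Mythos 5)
The paper does not actually prove Theorem~\ref{theorem:support}; it cites it from \citet{Sloan1994-cl} as a classical result, and the appendix only discusses explicit constructions of good lattices (the Fibonacci rule for $s=2$, the Korobov form $\bm{z}=(1,\ell,\ell^2,\ldots)$ for prime $N$), not the existence bound itself. So there is no in-paper proof to compare against, and your sketch must be assessed on its own merits --- and it has a genuine gap.

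The problem is that you average $P_\alpha$ directly over $\bm{z}$, and for $\alpha>1$ this cannot produce the claimed bound. After exchanging sums and using $\rho(\bm{h};N)\approx 1/N$ for non-aliased $\bm{h}$, the bulk contribution factorises into $s$ one-dimensional sums of the form $\sum_{1\le|h|\le N/2}|h|^{-\alpha}$; for $\alpha>1$ these converge to $2\zeta(\alpha)+o(1)$, giving $O(1)$ rather than $O(\log N)$ per coordinate. In fact already the single term $\bm{h}=(1,-1,0,\ldots,0)$ contributes about $1/N$ to $M(N)$, so the mean of $P_\alpha$ is $\Theta(1/N)$ for $\alpha>1$ --- which, for large $N$, is far larger than the target $(2\log N)^{\alpha s}/N^\alpha$. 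The step ``minimum $\le$ mean, and the mean satisfies the claimed bound'' is therefore false when applied to $P_\alpha$.

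The correct structure, which you gesture at but conflate, keeps the averaging and the lifting separate. One averages $P_1$ (not $P_\alpha$): there the one-dimensional truncated sum $\sum_{1\le|h|\le N/2}|h|^{-1}=2\log N+O(1)$ genuinely produces the $(2\log N)^s$ factor, and pigeonholing yields a $\bm{z}$ with $P_1(\bm{z},N)\le (2\log N)^s/N + O\bigl((\log N)^{s-1}/N\bigr)$. One then lifts to general $\alpha$ via the elementary superadditivity inequality $\sum_i a_i^\alpha \le \bigl(\sum_i a_i\bigr)^\alpha$ for $a_i\ge 0$, $\alpha\ge 1$, applied to $a_{\bm{h}}=(\bar h_1\cdots\bar h_s)^{-1}$ with $\bm{h}$ ranging over $L^\top\setminus\{\bm{0}\}$; this gives $P_\alpha(\bm{z},N)\le P_1(\bm{z},N)^\alpha$ and hence the stated bound. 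Your mention of a ``Korobov lifting inequality'' names the right ingredient, but it must be applied \emph{after} averaging $P_1$; as written, the lifting has been folded into an averaging of $P_\alpha$ where it has no logical home, and where the key $\log N$ factors never arise.
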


The main result of this paper is the following.
\begin{theorem}\label{theorem:goodlattice}
  Suppose that the activation function of $\tilde{u}$ and hence $\tilde{u}$ itself are sufficiently smooth so that there exists an $\alpha > 0$ such that $\mathcal{P}[\tilde{u}] \in E_\alpha$. Then, for given integers $N \geq 2$ and $s \geq 2$, there exists an integer vector $\bm{z} \in \mathbb{Z}^s$ such that $L^*=  \{ \mbox{the decimal part of\ } \frac{j}{N} \bm{z}  \mid j=0,\ldots,N-1\}$ is a ``good lattice'' in the sense that
  \begin{equation}\label{eq:goodlattice}
    \textstyle \left|\int_{[0, 1]^s} \!\!\! \mathcal{P}[\tilde u](\bm{x}) \mathrm{d}\bm{x}
    \!-\!
    \frac{1}{N}\!\!\sum_{\bm{x}_j \in L^*} \!\! \mathcal{P}[\tilde u](\bm{x}_j)\right|
    =\!O\left(\frac{(\log N)^{\alpha s}}{N^\alpha}\right)\!\!.
  \end{equation}
\end{theorem}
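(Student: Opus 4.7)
The plan is to chain together the ingredients already assembled in the excerpt: (i) the Fourier expansion of the integrand, (ii) Lemma~\ref{lemma:lemma} identifying the exact form of the aliasing error on a rank-1 lattice, (iii) the Korobov-type decay of Fourier coefficients from Definition~\ref{definition:Korobov}, and (iv) the number-theoretic existence result in Theorem~\ref{theorem:support}. Nothing new has to be proved from scratch; the theorem amounts to tying these four steps together with the correct book-keeping.

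First I would expand $\varepsilon(\bm{x})=\mathcal{P}[\tilde u](\bm{x})$ in its Fourier series and substitute into $\frac{1}{N}\sum_{\bm{x}_j\in L^*}\mathcal{P}[\tilde u](\bm{x}_j)$. After interchanging the two summations, Lemma~\ref{lemma:lemma} collapses the inner average $\frac{1}{N}\sum_{j=0}^{N-1}\exp(2\pi\mathrm{i}\bm{h}\cdot\bm{x}_j)$ to $1$ exactly when $\bm{h}\in L^\top$ and to $0$ otherwise. The $\bm{h}=0$ term reconstructs $\int_{[0,1]^s}\mathcal{P}[\tilde u](\bm{x})\mathrm{d}\bm{x}$, so after subtraction and taking absolute values I obtain \eqref{eq:interror0}, and then \eqref{eq:interror}, using the explicit description $L^\top=\{\bm{h}\mid \bm{h}\cdot\bm{z}\equiv 0\pmod N\}$ already verified in the excerpt for the rank-1 lattice generated by $\bm{z}$.

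Next I would apply the Korobov bound $|\hat{\varepsilon}(\bm{h})|\le c/(\bar h_1\cdots\bar h_s)^\alpha$, which is exactly the hypothesis $\mathcal{P}[\tilde u]\in E_\alpha$, to pass from \eqref{eq:interror} to \eqref{eq:palpha}. The right-hand side of \eqref{eq:palpha} is $c\,P_\alpha(\bm{z},N)$ with $P_\alpha$ as in Theorem~\ref{theorem:support} (read as the Zaremba figure of merit summed over non-zero dual-lattice vectors). Theorem~\ref{theorem:support} then guarantees the existence of an integer generator $\bm{z}\in\mathbb{Z}^s$ with $P_\alpha(\bm{z},N)\le(2\log N)^{\alpha s}/N^\alpha+O((\log N)^{\alpha s-1}/N^\alpha)$, and the stated bound \eqref{eq:goodlattice} follows immediately by taking this $\bm{z}$ to define $L^*$.

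The step I expect to require the most care is the justification of the termwise manipulations of the Fourier series: the interchange of the finite average over $L^*$ with the infinite sum over $\bm{h}$, and the identification of the $\bm{h}=0$ coefficient with the integral. Both are legitimate provided the series is absolutely convergent, which is ensured by the Korobov hypothesis whenever $\alpha>1$, since $\sum_{\bm{h}\in\mathbb{Z}^s}(\bar h_1\cdots\bar h_s)^{-\alpha}=(1+2\zeta(\alpha))^s<\infty$; for $\alpha$ only slightly above $0$ one would need a mollification argument, but the regime relevant for the asymptotic rate in the conclusion is precisely $\alpha>1$, so the convergence is automatic. Once this is in place, everything else is algebraic substitution and invocation of the cited theorems.
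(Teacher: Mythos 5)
Your proposal is correct and follows essentially the same route the paper takes: the paper's own justification of Theorem~\ref{theorem:goodlattice} is exactly the chain in the main text — Fourier expansion, Lemma~\ref{lemma:lemma}, the dual-lattice characterization $L^\top=\{\bm{h}\mid\bm{h}\cdot\bm{z}\equiv 0\pmod N\}$, the Korobov bound yielding \eqref{eq:palpha}, and then Theorem~\ref{theorem:support} to pick a good $\bm{z}$ (with $P_\alpha(\bm{z},N)$ understood as the sum $\sum_{\bm{h}\neq 0,\,\bm{h}\cdot\bm{z}\equiv 0\pmod N}(\bar h_1\cdots\bar h_s)^{-\alpha}$, as you correctly read it despite the paper's compressed notation). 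Your extra remark that absolute convergence of the Fourier series, needed to justify the term-by-term interchange, forces $\alpha>1$ is a genuine sharpening the paper glosses over — the stated hypothesis $\alpha>0$ is too weak for the argument as written.
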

Intuitively, if $\mathcal{P}[\tilde{u}]$ satisfies certain conditions, we can find a set $L^*$ of collocation points with which the objective function \eqref{eq:dloss} approximates the integral \eqref{eq:closs} only within an error of $O(\frac{(\log N)^{\alpha s}}{N^\alpha})$.
This rate is much better than that of the uniformly random sampling (i.e., the Monte Carlo method), which is of $O(1/N^{\frac{1}{2}})$~\cite{Sloan1994-cl}, if the activation function of $\tilde{u}$ and hence the neural network $\tilde{u}$ itself are sufficiently smooth so that $\mathcal{P}[\tilde{u}] \in E_\alpha$ for a large $\alpha$.
Hence, in this paper, we call the training method that minimizes \eqref{eq:dloss}
for a lattice $L$ satisfying \eqref{eq:goodlattice} the \textit{good lattice training (GLT)}.

While any set of collocation points that satisfies the above condition will have the same convergence rate, a set constructed by the vector $\bm{z} \in \mathbb{Z}^s$ that minimizes \eqref{eq:palpha} leads to better accuracy.
When $s=2$, it is known that a good lattice can be constructed by setting $N = F_k$ and $\bm{z} = (1, F_{k-1})$, where $F_k$ denotes the $k$-th Fibonacci number~\cite{Niederreiter1992-qb, Sloan1994-cl}.
In general, an algorithm exists that can determine the optimal $\bm{z}$ with a computational cost of $O(N^2)$.
See Appendix ``Theoretical Background'' for more details.
Also, we can retrieve the optimal $\bm{z}$ from numerical tables found in references, such as \citet{Fang1994,Keng1981}.

\paragraph{Periodization and Randomization Tricks}
The integrand $\mathcal{P}[\tilde{u}]$ of the loss function \eqref{eq:closs} does not always belong to the Korobov space $E_\alpha$ with high smoothness $\alpha$.
To align the proposed GLT with theoretical expectations, we propose periodization tricks for ensuring periodicity and smoothness.

Given an initial condition at time $t=0$, the periodicity is ensured by extending the lattice twice as much along the time coordinate and folding it.
Specifically, instead of $t$, we use $\hat t$ as the time coordinate that satisfies $t=2\hat t$ if $\hat t<0.5$ and $t=2(1-\hat t)$ otherwise (see the lower right panel of Fig.~\ref{fig:collocation}, where the time is put on the horizontal axis).
Also, while not mandatory, we combine the initial condition $u_0(\bm{x}_{\!/t})$ and the neural network's output $\tilde{u}(t,\dots)$ as $\exp(-t)u_0(\bm{x}_{\!/t})+(1-\exp(-t))\tilde{u}(t,\bm{x}_{\!/t})$, thereby ensuring the initial condition, where $\bm{x}_{\!/t}$ denotes the set of coordinates except for the time coordinate $t$.
A similar idea was proposed in \citet{Lagaris1998}, which however does not ensure the initial condition strictly.
If a periodic boundary condition is given to the $k$-th space coordinate, we bypass learning it and instead map the coordinate $x_k$ to a unit circle in two-dimensional space.
Specifically, we map $x_k$ to $(x_k^{(1)}, x_k^{(2)}) = (\cos(2\pi x_k), \sin(2\pi x_k))$, assuring the loss function $\mathcal{P}[\tilde u]$ to take the same value at the both edges ($x_k=0$ and $x_k=1$) and be periodic.
Given a Dirichlet boundary condition $u = 0$ at $\partial\Omega$ to the $k$-th axis, we multiply the neural network's output $\tilde{u}(\dots,x_k,\dots)$ by $x_k(1 - x_k)$, and treat the result as the approximated solution.
This ensures the Dirichlet boundary condition is met, and the loss function $\mathcal{P}[\tilde u]$ takes zero at the boundary $\partial\Omega$, thereby ensuring the periodicity.
If a more complicated Dirichlet boundary condition is given, one can fold the lattice along the space coordinate in the same manner as the time coordinate and ensure the periodicity of the loss function $\mathcal{P}[\tilde u]$.

These periodization tricks aim to satisfy the periodicity conditions necessary for GLT to exhibit the performance shown in Theorem~\ref{theorem:goodlattice}.
However, they are also available for other sampling methods and potentially improve the practical performance by liberating them from the effort of learning initial and boundary conditions.

Since the GLT is grounded on the Fourier series foundation, it allows for the periodic shifting of the lattice.
Hence, we randomize the collocation points as
\begin{equation*}
  \textstyle L^*=  \{ \mbox{the decimal part of\ } \frac{j}{N} \bm{z}+\vr  \mid j=0,\ldots,N-1\},
  \label{eq:goodlatticeimplementation}
\end{equation*}
where $\vr$ follows the uniform distribution over the unit cube $[0, 1]^s$.
Our preliminary experiments confirmed that, if using the stochastic gradient descent (SGD) algorithm, resampling the random numbers $\vr$ at each training iteration prevents the neural network from overfitting and improves training efficiency.
We call this approach the randomization trick.

\paragraph{Limitations}
Not only is this true for the proposed GLT, but most strategies to determine collocation points are not directly applicable to non-rectangular or non-flat domain $\Omega$~\cite{Shankar2018}.
To achieve the best performance, the PDEs should be transformed to such a domain by an appropriate coordinate transformation.
See \citet{Knupp2020-ix, Thompson1985-tx} for examples.

Previous studies on numerical analysis addressed the periodicity and smoothness conditions on the integrand by variable transformations~\cite{Sloan1994-cl} (see also Appendix ``Theoretical Background'').
However, our preliminary experiments confirmed that it did not perform optimally in typical problem settings.
Intuitively, these variable transformations reduce the weights of regions that are difficult to integrate, suppressing the discretization error.
This implies that, when used for training, the regions with small weights remain unlearned.
As a viable alternative, we introduced the periodization tricks to ensure periodicity.

The performance depends on the smoothness of the physics-informed loss, and hence on the smoothness of the neural network and the true solution.
See Appendix ``Theoretical Background'' for details.

\section{Experiments and Results}
\paragraph{Physics-Informed Neural Networks}
We modified the code from the official repository\footnote{\url{https://github.com/maziarraissi/PINNs} (MIT license)} of \citet{Raissi2019}, the original paper on PINNs.
We obtained the datasets of the nonlinear Schr\"{o}dinger (NLS) equation, Korteweg--De Vries (KdV) equation, and Allen-Cahn (AC) equation from the repository.
The NLS equation governs wave functions in quantum mechanics, while the KdV equation models shallow water waves, and the AC equation characterizes phase separation in co-polymer melts.
These datasets provide numerical solutions to initial value problems with periodic boundary conditions.
Although they contain numerical errors, we treated them as the true solutions $u$.
These equations are nonlinear versions of hyperbolic or parabolic PDEs.
Additionally, as a nonlinear version of an elliptic PDE, we created a dataset for $s$-dimensional Poisson's equation, which produces analytically solvable solutions with $2^s$ modes with the Dirichlet boundary condition.
We examined the cases where $s\in\{2,4\}$.
See Appendix ``Experimental Settings'' for further information.

Unless otherwise stated, we followed the repository's experimental settings for the NLS equation.
The physics-informed loss was defined as $\mathcal P[\tilde{u}]=\frac{1}{N}\sum_{j=0}^{N-1}\|\mathcal{N}[\tilde{u}](\bm{x}_j)\|^2$ given $N$ collocation points $\{\bm{x}_j\}_{j=0}^{N-1}$.
This can be regarded as a finite approximation to the squared 2-norm $|\mathcal{N}[\tilde{u}]|_2^2=\int_{\Omega}\|\mathcal{N}[\tilde{u}](\bm{x})\|^2\mathrm{d}\bm{x}$.
The state of the NLS equation is complex; we simply treated it as a 2D real vector for training and used its absolute value for evaluation and visualization.
Following \citet{Raissi2019}, we evaluated the performance using the relative error, which is the normalized squared error
$\mathcal L(\tilde{u}, u;\bm{x}_j)=(\sum_{j=0}^{N_e-1}\|\tilde{u}(\bm{x}_j)-u(\bm{x}_j)\|^2)^{\frac{1}{2}}/(\sum_{j=0}^{N_e-1}\|u(\bm{x}_j)\|^2)^{\frac{1}{2}}$
at predefined $N_e$ collocation points $\{\bm{x}_j\}_{j=0}^{N_e-1}$.
This is also a finite approximation to $|\tilde{u}-u|_2/|u|_2$.

We applied the above periodization tricks to each test and model for a fair comparison.
For Poisson's equation with $s=2$, which gives the exact solutions, we followed the original learning strategy using the L-BFGS-B method preceded by the Adam optimizer~\cite{Kingma2014b} for 50,000 iterations to ensure precise convergence.
For other datasets, which contain the numerical solutions, we trained PINNs using the Adam optimizer with cosine decay of a single cycle to zero~\cite{Loshchilov2017} for 200,000 iterations and sampled a different set of collocation points at each iteration.
See Appendix ``Experimental Settings'' for details.

We determined the collocation points using uniformly random sampling, uniformly spaced sampling, LHS, the Sobol sequence, and the proposed GLT.
For the GLT, we took the number $N$ of collocation points and the corresponding integer vector $\bm{z}$ from numerical tables in \cite{Fang1994,Keng1981}.
We used the same values for uniformly random sampling and LHS to maintain consistency.
For uniformly spaced sampling, we selected numbers $N=m^s$ for $m\in\mathbb{N}$ that were closest to a number used in the GLT, creating a unit cube of $m$ points on each side.
We additionally applied the randomization trick.
For the Sobol sequence, we used $N=2^m$ for $m\in\mathbb N$ due to its theoretical background.
We conducted five trials for each number $N$ and each method.
All experiments were conducted using Python v3.7.16 and tensorflow v1.15.5~\cite{tensorflow} on servers with Intel Xeon Platinum 8368.

\begin{figure}
  \centering
  \includegraphics[scale=0.8]{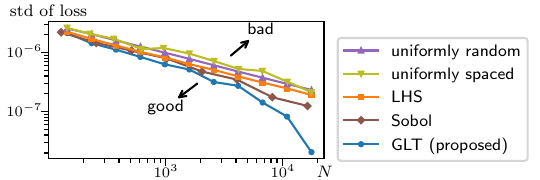}
  \caption{The number $N$ of collocation points and the standard deviation of the physics-informed loss, which approximates the discretization error $|\eqref{eq:closs}-\eqref{eq:dloss}|$.}
  \label{fig:PINNsTest}
\end{figure}

\begin{table*}[t]
  \centering
  \caption{Trade-Off between Number $N$ of Collocation Points and Relative Error $\mathcal L$.}
  \label{tab:datasets}
  \scriptsize
  \renewcommand{\arraystretch}{0.9}
  \begin{tabular}{c@{ }lrrrrrrrrrr}
    \toprule
                                                                                     &                  & \multicolumn{5}{c}{\# of points $N^\dagger$} & \multicolumn{5}{c}{relative error $\mathcal L^\ddagger$}                                                                                                                                                                                                   \\
    \cmidrule(lr){3-7}\cmidrule(lr){8-12}
                                                                                     &                  & \multicolumn{1}{c}{NLS}                      & \multicolumn{1}{c}{KdV}                                  & \multicolumn{1}{c}{AC} & \multicolumn{2}{c}{Poisson} & \multicolumn{1}{c}{NLS} & \multicolumn{1}{c}{KdV} & \multicolumn{1}{c}{AC} & \multicolumn{2}{c}{Poisson}                                 \\
    \cmidrule(lr){6-7}\cmidrule(lr){11-12}
                                                                                     &                  &                                              &                                                          &                        & $s=2$                       & $s=4$                   &                         &                        &                             & $s=2$         & $s=4$         \\
    \midrule
    \scalebox{1.0}[0.8]{\textcolor{C4}{$\blacktriangle$}}                            & uniformly random & $>$4,181                                     & $>$4,181                                                 & 4,181                  & $>$4,181                    & 1,019                   & 3.11                    & 2.97                   & 1.55                        & 28.53         & 0.28          \\
    \rotatebox[origin=c]{180}{\scalebox{1.0}[0.8]{\textcolor{C8}{$\blacktriangle$}}} & uniformly spaced & 2,601                                        & 4,225                                                    & $>$4,225               & $>$4,225                    & $>$4,096                & 2.15                    & 3.28                   & 1.95                        & 5.16          & 1437.12       \\
    \scalebox{0.7}{\textcolor{C1}{$\blacksquare$}}                                   & LHS              & $>$4,181                                     & 4,181                                                    & 4,181                  & 4,181                       & 701                     & 2.75                    & 3.06                   & 1.25                        & 246.29        & 0.24          \\
    \rotatebox[origin=c]{45}{\scalebox{0.6}{\textcolor{C5}{$\blacksquare$}}}         & Sobol            & 2,048                                        & 2,048                                                    & 4,096                  & $>$4,096                    & 1,024                   & 2.05                    & 2.52                   & 1.22                        & 14.74         & 1.22          \\
    \midrule
    \textcolor{C0}{$\bullet$}                                                        & GLT (proposed)   & \textbf{987}                                 & \textbf{987}                                             & \textbf{1,597}         & \textbf{610}                & \textbf{307}            & \textbf{1.22}           & \textbf{2.19}          & \textbf{0.93}               & \textbf{0.76} & \textbf{0.15} \\
    \bottomrule                                                                                                                                                                                                                                                                                                                                                                                                     \\[-3mm]
    \multicolumn{11}{l}{$\dagger$ \# of points $N$ at competitive relative error $\mathcal L$ (under horizontal red line in Fig.~\ref{fig:PINNsResults}).}                                                                                                                                                                                                                                                          \\
    \multicolumn{11}{l}{$\ddagger$ relative error $\mathcal L$ at competitive \# of points $N$ (on vertical green line in Fig.~\ref{fig:PINNsResults}). Shown in the scale of $10^{-3}$.}                                                                                                                                                                                                                           \\
  \end{tabular}
\end{table*}

\begin{figure*}[t]
  \tabcolsep=1mm
  \centering
  \scriptsize
  \includegraphics[scale=0.8]{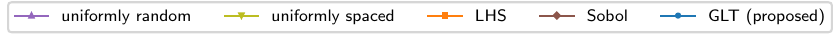}                  \\
  \begin{tabular}{cccccc}
    \hspace*{-2mm}\includegraphics[scale=0.8]{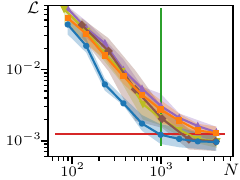}       &
    \hspace*{-2mm}\includegraphics[scale=0.8]{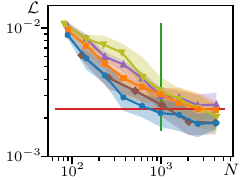}       &
    \hspace*{-2mm}\includegraphics[scale=0.8]{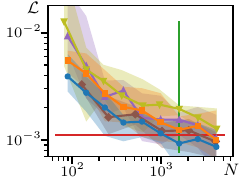}        &
    \hspace*{-2mm}\includegraphics[scale=0.8]{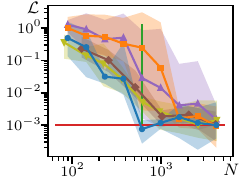} &
    \hspace*{-2mm}\includegraphics[scale=0.8]{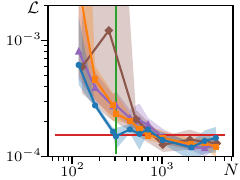}   \\[-1.5mm]
    \hspace*{4mm} NLS                                                    &
    \hspace*{4mm} KdV                                                    &
    \hspace*{4mm} AC                                                     &
    \hspace*{4mm} Poisson with $s=2$                                     &
    \hspace*{4mm} Poisson with $s=4$                                       \\[-2mm]
  \end{tabular}
  \caption{The results of PINNs.
    The number $N$ of collocation points and the relative error $\mathcal L$.}
  \label{fig:PINNsResults}
  \vspace*{3mm}
  \tabcolsep=1mm
  \centering
  \scriptsize
  \begin{tabular}{ccccc}
    \hspace*{-2mm}\includegraphics[scale=0.8]{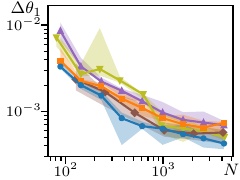} &
    \hspace*{-2mm}\includegraphics[scale=0.8]{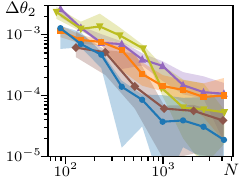} &
    \hspace*{-2mm}\includegraphics[scale=0.8]{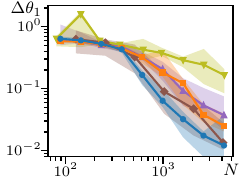}  &
    \hspace*{-2mm}\includegraphics[scale=0.8]{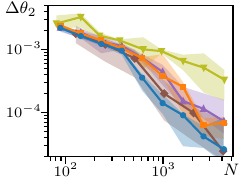}    \\[-1.5mm]
    \hspace*{4mm} KdV $\theta_1$                                             &
    \hspace*{4mm} KdV $\theta_2$                                             &
    \hspace*{4mm} AC $\theta_1$                                              &
    \hspace*{4mm} AC $\theta_2$ \\[-2mm]
  \end{tabular}
  \caption{The results of system identification.
    The number $N$ of collocation points and the relative error $\Delta\theta$ of the learnable parameter $\theta$.
    The legend can be found in Fig.~\ref{fig:PINNsResults}.
  }
  \label{fig:identification}
\end{figure*}
\paragraph{Results: Accuracy of Physics-Informed Loss}
First, we evaluate the accuracy of the physics-informed loss \eqref{eq:dloss} in approximating the integrated loss \eqref{eq:closs}.
Note that it is intractable to obtain the integrated loss \eqref{eq:closs}, which motivates the present study.
Instead, we use the standard deviation of the physics-informed loss \eqref{eq:dloss} as an approximation to the discretization error $|\eqref{eq:closs}-\eqref{eq:dloss}|$.
This is because the average of the physics-informed loss \eqref{eq:dloss} is assumed to converge to the integrated loss \eqref{eq:closs}, and the standard deviation represents the average error.
We trained the PINNs on the NLS equation with $N=610$ collocation points determined by LHS.
Then, we evaluated the physics-informed loss \eqref{eq:dloss} for different methods and different numbers of collocation points.
For each combination of method and number, we performed 10,000 trials and summarized their results in Fig.~\ref{fig:PINNsTest}.

Except for the Sobol sequence and GLT, the other methods exhibit a similar trend, showing a linear reduction on the log-log plot.
This trend aligns with the theoretical result that the convergence rate $O(1/N^{\frac{1}{2}})$ of the Monte Carlo method and that $O(1/N^{\frac{1}{s}})$ of the uniformly spaced sampling.
The Sobol sequence shows a slightly faster reduction, and the GLT demonstrates a further accelerated reduction as the number $N$ increases.
This result implies that by using the GLT, the physics-informed loss \eqref{eq:dloss} approximates the integrated loss \eqref{eq:closs} more accurately with the same number $N$ of collocation points, leading to faster training and improved accuracy.

The Sobol sequence produces the discretization error of $O(\frac{(\log N)^{s}}{N})$ for smooth solutions $u$ and neural networks, which is smaller than that $O(1/N^{\frac{1}{2}})$ of the Monte Carlo method for a large number $N$~\cite{Lye2020,Mishra2021}.
As shown in Theorem~\ref{theorem:goodlattice}, the proposed GLT produces the discretization error of $O(\frac{(\log N)^{\alpha s}}{N^\alpha})$, which is comparable to the Sobol sequence for solutions with $\alpha=1$ and is much smaller for smoother solutions with $\alpha>1$.
These are several higher-order quasi-Monte Carlo methods, which potentially suppress the discretization errors for smooth solutions with $\alpha>1$.
However, these methods require the prior knowledge about the smoothness $\alpha$ and careful adjustments of hyperparameters~\cite{Longo2021}.
On the other hand, the proposed GLT is free from these prior knowledge or adjustments and achieves better performances depending on the smoothness $\alpha$ of the solution $u$ and the neural network, which is a significant advantage.

\paragraph{Results: Performance of PINNs}
Figure~\ref{fig:PINNsResults} shows the average relative error $\mathcal L$ using solid lines, with the maximum and minimum errors depicted by shaded areas.
As $N$ increases, the relative error $\mathcal L$ decreases and eventually reaches saturation.
This saturation is attributed to several factors: the limitations in the network architecture as mentioned in Theorem \ref{thm:3.1}, numerical errors in the datasets, discretization errors in relative error $\mathcal L$, and rounding errors in computation.

We report the minimum numbers $N$ of collocation points with which the relative error $\mathcal L$ was saturated in the left half of Table~\ref{tab:datasets}.
Specifically, we consider a relative error $\mathcal L$ below 130 \% of the minimum observed one as saturated; the thresholds are denoted by horizontal red lines in Fig.~\ref{fig:PINNsResults}.
The proposed GLT exhibited competitive performances with considerably fewer collocation points.
Specifically, it required less than half the number of points as compared to the second-best methods, and in the case of $s=2$-dimensional Poisson's equation, it needed only a seventh of the points.
These findings indicate that the proposed GLT can reduce computational costs significantly.
Note that, for $s=4$-dimensional Poisson's equation, the performance of the uniformly spaced sampling was extremely inferior, resulting in it not being captured in the image.

Following this, we standardized the number $N$ of collocation points (and hence, the computational cost).
In the right half of Table~\ref{tab:datasets}, we list the relative error $\mathcal L$ observed for collocation points where the relative error $\mathcal L$ of one of the comparison methods reached saturation, as denoted by vertical green lines in Fig.~\ref{fig:PINNsResults}.
A smaller error $\mathcal L$ indicates that a method outperforms others at the same computational cost.
The proposed GLT yielded the smallest relative errors $\mathcal L$ across all cases, with a particularly pronounced difference in Poisson's equation with $s=2$.
We show the true solutions and the residuals of example results with such $N$ in Fig.~\ref{fig:PINNsExample} in Appendix ``Additional Results.''

Therefore, we conclude that the proposed GLT can solve various PDEs with better performances and fewer collocation points, provided the dimension number $s$ of the domain $\Omega$ is four or less---a range adequate for most physical simulations.
We also confirmed that the periodization tricks significantly improve the overall performance in Appendix ``Additional Results.''
Refer to Appendix ``Higher Dimensional Case'' for higher-dimensional cases.

\paragraph{Results: System Identification}
We further assessed the performance of white-box system identification, in a similar way as in those in \citet{Raissi2019}.
For the KdV and AC equations, we treated two parameters, $(\theta_1,\theta_2)$, as learnable parameters $(\tilde\theta_1,\tilde\theta_2)$ and initialized them to zero.
We extracted the true solutions $u$ at $N_s$ randomly selected points $\{\bm{x}_j\}_{j=0}^{N_s-1}$ as observations.
During the training, in addition to the physics-informed loss, we minimized the mean squared error of the state $\tilde u$ at these points, that is, $\frac{1}{N_s}\sum_{j=0}^{N_s-1} \|u(\bm{x}_j)-\tilde u(\bm{x}_j)\|^2$.
This procedure guides the learnable parameters $(\tilde\theta_1,\tilde\theta_2)$ to the true values $(\theta_1,\theta_2)$.
We set the number of points $N_s$ to 100 for the KdV equation and 200 for the AC equation, which were nearly the minimum required for successful system identification.
All other experimental settings were identical to those in the previous experiments.

Figure~\ref{fig:identification} shows the median of the relative error $\Delta\theta_1=|\tilde\theta_1-\theta_1|/|\theta_1|$ for the five trials.
Our proposed GLT demonstrated the highest precision.
Remarkably, it achieved comparable accuracy with approximately half the number of collocation points $N$ required for the Sobol sequence in most cases, and significantly fewer than other methods.
Recall that the observations $\{\bm{x}_j\}_{j=0}^{N_s-1}$ were selected using the Monte Carlo method, leading to the error in the order of $O(1/N_s^{\frac{1}{2}})$ for all methods.
Nonetheless, the accuracy of parameter identification relies heavily on the strategy to determine the collocation points for the physics-informed loss and its number $N$.
The proposed GLT method proved to be superior in this aspect as well.

\paragraph{Competitive Physics-Informed Neural Networks}
Competitive PINNs (CPINNs) are an improved version of PINNs with an additional neural network $D:\Omega\rightarrow\R$ called a discriminator~\cite{Zeng2023}.
Its objective function is $\frac{1}{N}\sum_{j=0}^{N-1} D(\bm{x}_j)\mathcal{N}[\tilde{u}](\bm{x}_j)$; the discriminator $D$ is trained to maximize it, whereas the neural network $\tilde{u}$ is trained to minimize it, forming a zero-sum game.
The Nash equilibrium offers the solution to a given PDE.
CPINNs employed the competitive gradient descent algorithm to accelerate the convergence~\cite{Schaefer2019,Schaefer2020}.
The objective function is also regarded as a finite approximation to the integral $\int_{\bm{x}\in\Omega}D(\bm{x})\mathcal{N}[\tilde{u}](\bm{x}) \mathrm{d}\bm{x}$; therefore, the proposed GLT in applicable to CPINNs.

We modified the code accompanying the manuscript and investigated the NLS and Burgers' equations\footnote{See Supplementary Material at \url{https://openreview.net/forum?id=z9SIj-IM7tn} (MIT License)\label{footenote:CPINNs}}.
The NLS equation is identical to the one above.
See Appendix ``Experimental Settings'' for details about Burgers' equation.
The number $N$ of collocation points was 20,000 by default and varied.
We folded the coordinates to ensure the periodicity of the loss function for the proposed GLT but did not ensure the initial and boundary conditions; we trained neural networks to learn the initial and boundary conditions following the original experimental settings.
Also, we did not apply the randomization trick.
All experiments were conducted using Python v3.9.16 and Pytorch v1.13.1~\cite{Paszke2017} on servers with Intel Xeon Platinum 8368 and NVIDIA A100.

\begin{figure}[t]
  \centering
  \tabcolsep=1mm
  \scriptsize
    \includegraphics[scale=0.65]{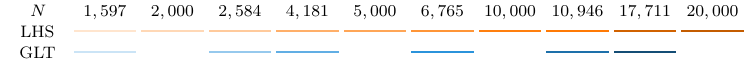}
\begin{tabular}{cc}
    \includegraphics[scale=0.65]{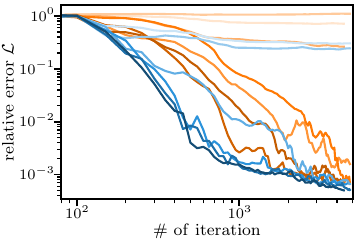} &
    \includegraphics[scale=0.65]{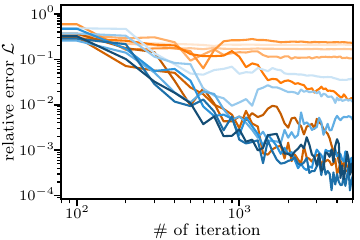} \\[-1mm]
    NLS                                                      &
    Burgers                                                      \\
  \end{tabular}
  \caption{The results of CPINNs.
    The number of iterations and the relative error $\mathcal L$.}
  \label{fig:CPINNsResults}
\end{figure}

\paragraph{Results}
We have summarized the results in Fig.~\ref{fig:CPINNsResults}.
In the case of the NLS equation, using LHS, the relative error $\mathcal L$ declines slowly even when $N=20,000$, and there was almost no improvement for $N\le 4,181$.
Conversely, using GLT rapidly reduces the relative error $\mathcal L$ at $N=6,765$.
In most cases, GLT requires only one-third of the collocation points that LHS needs to achieve a comparable level of performance.
In the case of the Burgers' equation, CPINNs using GLT demonstrated progress in learning with $N=2,584$ collocation points, whereas CPINNs using LHS method with $N=10,946$ achieved a worse performance rate.
These results indicate that the proposed GLT exhibits competitive or superior convergence speed with 3 to 4 times fewer collocation points.
The original paper demonstrated that CPINNs have a faster convergence rate than vanilla PINNs, but the GLT can further accelerate it.

\section{Conclusion}
This paper highlighted that the physics-informed loss, commonly used in PINNs and their variants, is a finite approximation to the integrated loss.
From this perspective, we proposed good lattice training (GLT) to determine collocation points.
This method enables a more accurate approximation of the integrated loss with a smaller number of collocation points.
Experimental results using PINNs and CPINNs demonstrated that the GLT can achieve competitive or superior performance with much fewer collocation points.
These results imply a significant reduction in computational cost and contribute to the large-scale computation of PINNs.

As shown in Figs.~\ref{fig:PINNsResults} and \ref{fig:CPINNsResults}, the current problem setting reaches performance saturation with around $N=1,000$ collocation points due to the network capacity and numerical errors in the datasets.
However, Figure~\ref{fig:PINNsTest} demonstrates that the GLT significantly enhances the approximation accuracy even when using many more collocation points.
This implies that the GLT is particularly effective in addressing larger-scale and high-precision problem settings, which will be explored further in future research.

\section{Acknowledgments}
This study was partially supported by JST CREST (JPMJCR1914), JST PRESTO (JPMJPR21C7), JST ASPIRE (JPMJAP2329), JST Moonshot R\&D (JPMJMS2033-14), and JSPS KAKENHI (24K15105), and was achieved through the use of SQUID at D3 Center, Osaka University.


\newpage
\appendix
\renewcommand\thetable{A\arabic{table}}
\setcounter{table}{0}
\renewcommand\thefigure{A\arabic{figure}}
\setcounter{figure}{0}

{\Huge Technical Appendix}
\section{Theoretical Background}
\subsection{Proof of Theorem \ref{thm:3.1}}
In this section, we prove Theorem \ref{thm:3.1}:
\begin{theorem*}
  Suppose that the class of neural networks used for PINNs includes an $\varepsilon_1$-approximator $\tilde{u}_\mathrm{opt}$ to the exact solution $u^*$ to the PDE $\mathcal{N}[u]=0$:
  \begin{align*}
    \|u^* - \tilde{u}_\mathrm{opt} \| \leq \varepsilon_1,
  \end{align*}
  and that \eqref{eq:dloss} is an $\varepsilon_2$-approximation of $\eqref{eq:closs}$ for the approximated solution $\tilde{u}$ and also for $\tilde{u}_\mathrm{opt}$:
  \begin{align*}
    |
    \textstyle \int_{[0, 1]^s} \mathcal{P}[{u}](\bm{x}) \mathrm{d}\bm{x} - \frac{1}{N}\sum_{\vx_j\in L^*} \mathcal{P}[u](\bm{x}_j)| \leq \varepsilon_2
  \end{align*}
  for $u=\tilde{u}$ and $u=\tilde{u}_\mathrm{opt}$.
  Suppose also that there exist a constant $c_\mathrm{p} > 0$ such that
  \begin{align*}
    \|u - v\| \leq c_\mathrm{p} \| \mathcal{N}[u] - \mathcal{N}[v] \|
  \end{align*}
  and
  a constant $c_\mathrm{L} > 0$ such that
  \begin{align*}
    \|\mathcal{N}[u] - \mathcal{N}[v]\| \leq c_\mathrm{L} \| u - v \|.
  \end{align*}
  Then, the numerical error of $\tilde{u}$ is estimated by
  \begin{equation*}
    \textstyle \|u^* - \tilde{u}\|
    \leq (1 + c_\mathrm{p} c_\mathrm{L}) \varepsilon_1
    + c_\mathrm{p} \sqrt{\frac{1}{N}\sum_{\vx_j\in L^*} \mathcal{P}[\tilde u](\bm{x}_j) + \varepsilon_2}.
  \end{equation*}
\end{theorem*}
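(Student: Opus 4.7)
The plan is to chain a triangle inequality with the two-sided operator bound, reducing everything to quantities that are either controlled by $\varepsilon_1$, by the discrete loss, or by $\varepsilon_2$. First I would split
\[
  \|u^* - \tilde u\| \le \|u^* - \tilde u_\mathrm{opt}\| + \|\tilde u_\mathrm{opt} - \tilde u\|,
\]
and dispose of the first summand using the approximation hypothesis, which gives $\|u^* - \tilde u_\mathrm{opt}\| \le \varepsilon_1$. The remaining task is to bound $\|\tilde u_\mathrm{opt} - \tilde u\|$ by the quantities appearing on the right-hand side of the target inequality.

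Next I would pass from the solution side to the residual side via the coercivity assumption, $\|\tilde u_\mathrm{opt} - \tilde u\| \le c_\mathrm{p}\,\|\mathcal N[\tilde u_\mathrm{opt}] - \mathcal N[\tilde u]\|$, and apply a second triangle inequality to obtain
\[
  \|\mathcal N[\tilde u_\mathrm{opt}] - \mathcal N[\tilde u]\| \le \|\mathcal N[\tilde u_\mathrm{opt}]\| + \|\mathcal N[\tilde u]\|.
\]
The first summand is controlled by inserting the known zero residual $\mathcal N[u^*]=0$ and invoking the Lipschitz bound: $\|\mathcal N[\tilde u_\mathrm{opt}]\| = \|\mathcal N[\tilde u_\mathrm{opt}] - \mathcal N[u^*]\| \le c_\mathrm{L}\,\|\tilde u_\mathrm{opt} - u^*\| \le c_\mathrm{L}\varepsilon_1$. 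Combined with the $\varepsilon_1$ carried over from the first triangle inequality, this accounts for the $(1 + c_\mathrm{p} c_\mathrm{L})\varepsilon_1$ term in the bound.

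The remaining term $\|\mathcal N[\tilde u]\|$ is precisely what the physics-informed loss is designed to control. Since $\mathcal P[\tilde u](\bm x) = \|\mathcal N[\tilde u](\bm x)\|^2$, the continuous loss \eqref{eq:closs} evaluated at $\tilde u$ equals $\|\mathcal N[\tilde u]\|^2$, and the $\varepsilon_2$-quadrature hypothesis applied to $\tilde u$ gives $\|\mathcal N[\tilde u]\|^2 \le \tfrac{1}{N}\sum_{\bm x_j \in L^*} \mathcal P[\tilde u](\bm x_j) + \varepsilon_2$. Taking square roots and multiplying by $c_\mathrm{p}$ produces the square-root term; summing the two contributions yields the stated inequality.

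The derivation is essentially mechanical once this decomposition is in place. The only subtle point is the direction of the triangle inequality on the residuals, which must be taken \emph{additively} (rather than as a subtractive reverse triangle inequality) so that the zero residual $\mathcal N[u^*]$ can be inserted and absorbed through the Lipschitz constant $c_\mathrm L$. I note in passing that only the $\varepsilon_2$-quadrature bound for $\tilde u$ is strictly needed along this route; the analogous bound for $\tilde u_\mathrm{opt}$, although listed as a hypothesis, is not used and presumably appears to support an alternative argument (for instance, one in which $\tilde u$ is taken to be the discrete minimiser so that $\tfrac1N\sum \mathcal P[\tilde u](\bm x_j) \le \tfrac1N \sum \mathcal P[\tilde u_\mathrm{opt}](\bm x_j)$ is further bounded via $\tilde u_\mathrm{opt}$).
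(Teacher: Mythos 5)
Your argument is correct and follows essentially the same route as the paper's proof: triangle inequality on $u^* - \tilde u$, coercivity to pass to residuals, insertion of $\mathcal N[u^*]=0$ with the Lipschitz bound to absorb the $\tilde u_\mathrm{opt}$ residual, and the $\varepsilon_2$-quadrature bound on $\|\mathcal N[\tilde u]\|^2$. Your closing observation is also accurate --- the paper's proof likewise never invokes the $\varepsilon_2$ hypothesis for $\tilde u_\mathrm{opt}$.
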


\begin{proof}
  First, we have
  \begin{align*}
    \|u^* - \tilde{u}\|
     & = \|u^* - \tilde{u}_\mathrm{opt} + \tilde{u}_\mathrm{opt} - \tilde{u}\|                                                 \\
     & \leq \|u^* - \tilde{u}_\mathrm{opt}\| + \|\tilde{u}_\mathrm{opt} - \tilde{u}\|                                          \\
     & \leq \|u^* - \tilde{u}_\mathrm{opt}\| + c_\mathrm{p} \| \mathcal{N}[\tilde{u}_\mathrm{opt}] - \mathcal{N}[\tilde{u}] \| \\
     & \leq \varepsilon_1 + c_\mathrm{p} \| \mathcal{N}[\tilde{u}_\mathrm{opt}] - \mathcal{N}[\tilde{u}] \|.
  \end{align*}
  Since $u^*$ is the exact solution, $\mathcal{N}[u^*] = 0$. Thus we have
  \begin{align*}
    \| \mathcal{N}[\tilde{u}_\mathrm{opt}] - \mathcal{N}[\tilde{u}] \|
     & = \|- \mathcal{N}[u^*] + \mathcal{N}[\tilde{u}_\mathrm{opt}] - \mathcal{N}[\tilde{u}] \|
    \\ &
    \leq \|\mathcal{N}[u^*] - \mathcal{N}[\tilde{u}_\mathrm{opt}]\| + \|\mathcal{N}[\tilde{u}] \|
    \\ & \leq c_\mathrm{L} \|u^* - u_\mathrm{opt}\| + \|\mathcal{N}[\tilde{u}] \|
    \\ & \leq c_\mathrm{L} \varepsilon_1 + \|\mathcal{N}[\tilde{u}] \|
  \end{align*}
  because $\mathcal{N}$ is assumed to be Lipschitz continuous.
  From the assumption, we have
  \begin{equation*}
  \begin{aligned}
    \|\mathcal{N}[\tilde{u}] \|^2 & = \int_{[0, 1]^s} \mathcal{P}[\tilde{u}](\bm{x}) \mathrm{d}\bm{x}
    \\ &\textstyle
    \leq \frac{1}{N}\sum_{\vx_j\in L^*} \mathcal{P}[\tilde u](\bm{x}_j)
    \\ &\textstyle
    \quad +
    \left|\int_{[0, 1]^s} \mathcal{P}[\tilde{u}](\bm{x}) \mathrm{d}\bm{x} - \frac{1}{N}\sum_{\vx_j\in L^*} \mathcal{P}[\tilde u](\bm{x}_j) \right|
    \\ &\textstyle
    \leq \frac{1}{N}\sum_{\vx_j\in L^*} \mathcal{P}[\tilde u](\bm{x}_j) + \varepsilon_2.
  \end{aligned}
  \end{equation*}
  Thus, we have
  \begin{equation*}
    \textstyle \|u^* - \tilde{u}\|
    \leq (1 + c_\mathrm{p} c_\mathrm{L}) \varepsilon_1
    + c_\mathrm{p} \sqrt{\frac{1}{N}\sum_{\vx_j\in L^*} \mathcal{P}[\tilde u](\bm{x}_j) + \varepsilon_2}.
  \end{equation*}
\end{proof}

\subsection{How to Find Good Lattice}
It is known that for $s=2$, $\bm{z} \in \mathbb{Z}^2$ in Theorem~\ref{theorem:goodlattice} can be constructed by using the Fibonacci sequence~\cite{Niederreiter1992-qb, Sloan1994-cl}. Specifically,
\begin{equation*}
  \begin{aligned}
    F_1 = F_2 = 1, \quad F_{k} = F_{k-1} + F_{k-2} \ (k \geq 3),
  \end{aligned}
\end{equation*}
$\bm{z} = (1, F_{k-1})^\top$ with $N= F_k$.
It is known that $\bm{h} \cdot \bm{z} \equiv 0 \pmod N$ gives a small $\bar{h}_1 \bar{h}_2$, making $P_\alpha(\bm{z}, N)$ large.
Hence, it is preferable to choose $\bm{z}$ so that $\bm{z}$  maximizes a minimized $\bar{h}_1 \bar{h}_2$:
\begin{equation*}
  \max_{\bm{z} \in \mathbb{Z}^2} \quad \min_{\bm{h} \in \mathbb{Z}^2, \bm{h} \neq 0, \bm{h} \cdot \bm{z} \equiv 0 \pmod N} \bar{h}_1 \bar{h}_2.
\end{equation*}
Because $\{ F_k \}$ is the Fibonacci sequence, it holds that for $\bm{h} = (F_{k-2}, 1)^\top$
\begin{equation*}
  \begin{aligned}
    \bm{h} \cdot \bm{z}
     & =  \begin{pmatrix} F_{k-2} \\ 1 \end{pmatrix}
    \cdot
    \begin{pmatrix} 1 \\ F_{k-1} \end{pmatrix}           \\
     & = F_{k-2} + F_{k-1} = F_{k} = N \equiv 0 \pmod N.
  \end{aligned}
\end{equation*}
This $\bm{h}$ gives $\bar{h}_1 \bar{h}_2 = F_{k-2}$. 
It is known that this achieves a small enough $\bar{h}_1 \bar{h}_2$.
Since $F_k$ is written as
\begin{equation*}
  F_k = \frac{1}{\sqrt{5}} \left(
  \left(\frac{1+\sqrt{5}}{2}\right)^k - \left(\frac{1-\sqrt{5}}{2}\right)^k
  \right),
\end{equation*}
$F_k/F_{k-2} \to ((1+\sqrt{5})/2)^2$ as $k \to \infty$. Hence, if $N=F_k$, then $\bar{h}_1 \bar{h}_2 = F_{k-2} \to N/((1+\sqrt{5})/2)^2$ as $k \to \infty$. Hence, roughly, $1/ (\bar{h}_1 \bar{h}_2)^\alpha$ in $P_\alpha(\bm{z}, N)$ becomes $O(1/N^\alpha)$.
See~\citet{Niederreiter1992-qb, Sloan1994-cl} for more strict proof.

The following explains why the Fibonacci sequence is better from a more number-theoretic point of view. For an integer $N \geq 2$, let $\bm{z} = (1, z_2) \in \mathbb{Z}^2$ with $\gcd(z_2, N)=1$. Let the continued fraction expansion of the rational number $z_2/N$ be
\begin{equation*}
  \frac{z_2}{N}=a_0 + 1/(a_1 + 1/(a_2 + \cdots) ),
\end{equation*}
where $a_0 = \lfloor \frac{z_2}{N} \rfloor$ and $a_i \in \mathbb{N}$ for $1\le i\le k$.
Let $r_i$ be the rational number obtained by truncating this expansion up to $a_i$. It can be confirmed that $r_i$ can be written as $r_i = p_i/q_i$ using $q_i, p_i \in \mathbb{Z}$, obtained as follows.
\begin{equation*}
  \begin{aligned}
     & p_{0} = a_0, p_{1}=a_0 a_1 + 1, p_i = a_i p_{i-1}+p_{i-2}, \\
     & q_{0} = 1, q_{1}=a_1, q_i = a_i q_{i-1}+q_{i-2}.
  \end{aligned}
\end{equation*}
It is also confirmed that this gives the reduced form: $\gcd(p_i, q_i )=1$~\cite{Niederreiter1992-qb}.

Such a continued fraction expansion is used in the Diophantine approximation. The Diophantine approximation is one of the problems in number theory, which includes the problem of approximating irrational numbers by rational numbers. The approximations are classified into ``good'' approximation and ``best'' approximation according to the degree of the approximation, and it is known that the above continued fraction expansion gives a ``good'' approximation.
This is why the lattice $L$ is called a ``good'' lattice.
The accuracy of the approximation has also been studied, and it is known that in the above case, the following holds~\cite{Niederreiter1992-qb}:
\begin{equation*}
  \begin{aligned}
    \frac{1}{q_i (q_i + q_{i+1})} \leq \left| \frac{z_2}{N}  - \frac{p_i}{q_i}\right| \leq \frac{1}{q_i q_{i+1}},
  \end{aligned}
\end{equation*}
from which it follows \cite{Niederreiter1992-qb} that
\begin{equation*}
  \begin{aligned}
    \frac{N}{\max_{1 \leq i \leq k} a_i + 2}
     & \leq \min_{\bm{h} \in \mathbb{Z}^2, \bm{h} \neq 0, \bm{h} \cdot \bm{z} \equiv 0 \pmod N} \bar{h}_1 \bar{h}_2 \\
     & \hspace*{2cm} \leq \frac{N}{\max_{1 \leq i \leq k} a_i}.
  \end{aligned}
\end{equation*}
Hence, to maximize $\min_{\bm{h}} \bar{h}_1 \bar{h}_2$, it is preferable to use the pair of $z_2$ and $N$ such that $\max_{1 \leq i \leq k} a_i$ is as small as possible. To this end, $a_i$'s should be determined by $a_0 = 0$ and $a_i = 1$ for all $i \geq 1$. Substituting these $a_i$'s into the formulas for $q_i$ and $p_i$ results in
\begin{equation*}
  \begin{aligned}
     & p_0 = 0, p_1 = 1, p_2 = 1, p_3 = 2, \ldots \\
     & q_0 = 1, q_1 = 1, q_2 = 2, q_3 = 3, \ldots
  \end{aligned}
\end{equation*}
which means $p_i = F_{i-1}, q_i = F_{i}$ where $F_i$'s are the Fibonacci sequence. Hence, the pair of $z_2 = F_{k-1}$ and $N = F_k$ gives a ``good'' lattice for computing objective functions.

When $s=2$ and $N$ is not a Fibonacci number or when $s>2$, there is no known method to generate a good lattice using a sequence of numbers.
Even then, we can determine the optimal $\bm{z}$ with a computational cost of $O(N^2)$ as follows:
The upper bound \eqref{eq:interror} is achieved when $\mathcal{P}[\tilde{u}]$ becomes the following function:
\begin{equation*}
  \begin{aligned}
    \mathcal{P}[\tilde{u}](\bm{x}) & = \prod_{k=1}^s F_\alpha(x_k),                                                           \\
    F_\alpha(x_k)                  & = 1 + \sum_{h \in \mathbb{Z}, h \neq 0} \frac{\exp(2 \pi \mathrm{i} h x_k)}{|h|^\alpha},
  \end{aligned}
\end{equation*}
where $x_k$ is the $k$-th element of $\bm{x}$.
Actually, when $\alpha$ is even, this function is known to be explicitly written by using the Bernoulli polynomials $B_\alpha(x)$~\cite{Sloan1994-cl}:
\begin{equation*}
  F_\alpha(x_k) = 1 - (-1)^\frac{\alpha}{2}\frac{(2 \pi)^\alpha B_\alpha}{\alpha!}.
\end{equation*}
Since this is a polynomial function, the integral can be found exactly; hence it is possible to find a loss function for this function for each lattice $L$. Therefore, to find an optimal $\bm{z}$, the loss function with respect to the above function should be minimized.
The computational cost for computing the loss function for each $L$ is $O(N)$.

Suppose each component of $\bm{z}$ is in $\{0, \ldots, N-1\}$.
If $N$ is a prime number or the product of two prime numbers, the existence of a $\bm{z}$ of the form $\bm{z} = (1, l, l^2 \pmod N, \ldots, l^{s-1} \pmod N)$ that gives a good lattice is known \cite{Korobov1959-qd,Korobov1960-lj,Sloan1994-cl}. Since $l \in \{ 0, 1, \ldots, N-1\}$, there are only $N$ candidates for $l$, and hence the computational cost for finding the optimal $\bm{z}$ is only $O(N^2)$ for each $N$.
Furthermore, as this optimization process can be fully parallelized, the computational time is quite short practically.
The values of $\bm{z}$ have been explored in the field of number theoretic numerical analysis, and typical values are available as numerical tables found in references, such as \citet{Fang1994,Keng1981}.

\subsection{Periodization of Integrand}
The method in this paper relies on the Fourier series expansion and hence assumes that the integrand can be periodically extended. In order to extend a given function periodically, it is convenient if the function always vanishes on the boundary. To this end, variable transformations are useful~\cite{Niederreiter1992-qb, Sloan1994-cl}.

For example, suppose that a function $f: [0, 1] \to \mathbb{R}$ does not satisfy $f(0) = f(1)$ and the integral on an interval $[0, 1]$
\begin{equation*}
  \begin{aligned}
    \int_0^1 f(x) \mathrm{d} x
  \end{aligned}
\end{equation*}
must be evaluated as an objective function. Let $y: [0, 1] \to [0, 1]$ be a monotonically increasing smooth map. Then, the change of variables
\begin{equation*}
  \begin{aligned}
    x = y(z) \quad z \in [0, 1]
  \end{aligned}
\end{equation*}
transforms the integral into
\begin{equation*}
  \begin{aligned}
    \int_0^1 f(x) \mathrm{d} x = \int_0^1  f(y(z)) \frac{\mathrm{d} y}{\mathrm{d} z}(z) \mathrm{d}z.
  \end{aligned}
\end{equation*}
Hence, if the derivative $\mathrm{d}y/\mathrm{d}z$ vanishes at $z = 0$ and $z=1$, the integrand becomes periodic. Functions that satisfy these conditions can be easily constructed using polynomials. For example, it is sufficient that this derivative is proportional to $z(1-z)$. Since $z (1-z) \geq 0$ on the interval $[0, 1]$, integration of $z(1-z)$ defines a monotonically increasing smooth function, and hence this can be used to define the function $y(z)$. Smoother transformations can be designed in a similar way by using higher-order polynomials with the constraints such as $\mathrm{d} y/\mathrm{d}z = \mathrm{d}^2 y/\mathrm{d}z^2 = 0$ at $z=0$ and $z=1$~\cite{Sloan1994-cl}.

However, our preliminary experiments confirmed that this periodization left some regions unlearned and exhibited poor performance because it reduces the weights of regions that are difficult to integrate.

\subsection{Smoothness and Performance}
The smoothness parameter $\alpha$ in Theorem~\ref{theorem:goodlattice} is associated with the physics-informed loss and is indeed influenced by the smoothness of the PDE solution itself.
We assume that the neural network $\tilde u$ employs a sufficiently smooth activation function.
When the neural network $\tilde u$ closely approximates a true solution $u$ that is $a$ times differentiable, it is reasonable to expect that the Fourier transform of $\tilde u$ closely resembles that of $u$.
Then, $\tilde u$ is also effectively $a$ times differentiable.
If the PDE $\mathcal{N}[u]=0$ involves derivatives up to the $k$-th order of this solution, and the loss function is based on the squared error (which is at least once differentiable), then it follows that $\alpha=a-k+1$.
Consequently, our method potentially demonstrates reduced performance for PDEs with discontinuities or sharp gradients.

\section{Experimental Settings}
We obtained the datasets of the nonlinear Schr\"{o}dinger (NLS) equation, Korteweg--De Vries (KdV) equation, and Allen-Cahn (AC) equation from the official repository\footnote{\url{https://github.com/maziarraissi/PINNs} (MIT license)} of \citet{Raissi2019}.
These are 2D PDEs, involving 1D space $x$ and 1D time $t$.
The solutions were obtained numerically using spectral Fourier discretization and a fourth-order explicit Runge--Kutta method.
In addition, we created datasets of Poisson's equation.

\paragraph{Nonlinear Schr\"odinger Equation}
A 1D NLS equation is expressed as
\begin{equation}
  \mathrm{i} u_t+\gamma u_{xx}+ |u|^{p-1}u=0,
\end{equation}
where the coordinate is $\bm{x}=(x,t)$, the state $u$ is complex, and $\mathrm{i}$ denotes the imaginary unit.
This equation is a governing equation of the wave function in quantum mechanics.
The linear version, which does not have the third term, is hyperbolic.
We simply treated a complex state $u$ as a 2D real vector $(w,v)$ for $u=w+\mathrm{i}v$ for training
We used $\gamma=0.5$ and $p=3$, which resulted in the following equations:
\begin{equation}
  \begin{aligned}
    w_{t}  + 0.5v_{xx} + (w^2+v^2)v =0, \\
    v_{t}  - 0.5w_{xx} - (w^2+v^2)w =0.
  \end{aligned}
\end{equation}
We used the domain $\Omega=[-5,5]\times[0,\pi/2]\ni(x,t)$ and the periodic boundary condition.
The initial condition was $u(x,0)=u_0(x)=\mathrm{sech}(x)$.
For evaluation, the numerical solution $u$ at $N_e=256\times201$ uniformly spaced points was provided, at which we calculated the relative error $\mathcal L$ of the absolute value $h=\sqrt{u^2+v^2}$.

For CPINNs, we folded each coordinate by the periodization trick and learned the initial and boundary conditions.
To learn the initial condition, $N_0=50$ collocation points $x_0,x_1,\dots,x_{N_0-1}$ were randomly sampled, with which the mean squared error of the state was calculated; namely $\frac{1}{N_0}\sum_{j=0}^{N_0-1} |\tilde{u}(x_j,0)-u_0(x_j)|^2_2$.
Also for the boundary condition, $N_b=50$ collocation points $t_0,t_1,\dots,t_{N_b-1}$ were randomly sampled, with which the mean squared errors of the state and the derivative were calculated; namely $\frac{1}{N_b}\sum_{j=0}^{N_b-1} |\tilde{u}(-5,t_j)-\tilde{u}(5,t_j)|^2_2$ and $\frac{1}{N_b}\sum_{j=0}^{N_b-1} |\tilde{u}_x(-5,t_j)-\tilde{u}_x(5,t_j)|^2_2$.
Then, the neural network was trained to minimize the sum of these three errors and the physics-informed loss.

\begin{figure}[t]
  \centering
  \scriptsize
  \tabcolsep=0mm
  \includegraphics[scale=0.7]{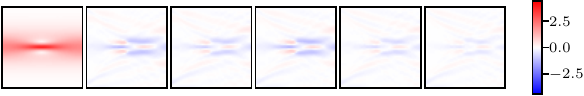}\hspace*{-3mm} \\[-1mm]
  NLS ($\times 10^2$)                                                 \\[1mm]
  \includegraphics[scale=0.7]{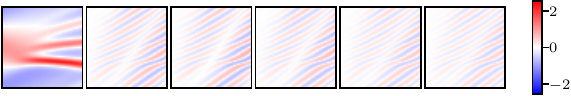}\hspace*{-3mm}           \\[-1mm]
  KdV ($\times 10^2$)                                                   \\[1mm]
  \includegraphics[scale=0.7]{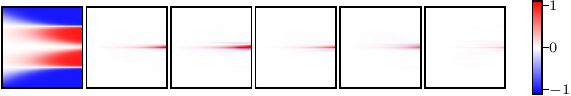}\hspace*{-3mm} \\[-1mm]
  AC ($\times 10^2$)                                         \\[1mm]
  \includegraphics[scale=0.7]{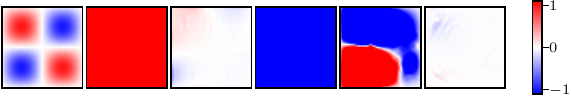}\hspace*{-3mm} \\[-1mm]
  Poisson with $s=2$  ($\times 10^3$) \hspace*{5mm}                \\[1mm]
  \includegraphics[scale=0.7]{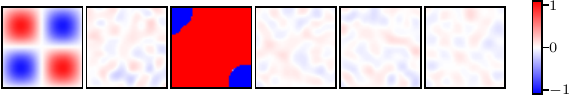}\hspace*{-3mm} \\[-1mm]
  Poisson with $s=4$  ($\times 10^3$) \hspace*{5mm}                \\
  \caption{Example results at competitive number $N$ of collocation points (on vertical green line in Fig.~\ref{fig:PINNsResults}).
    The leftmost panel shows the true solution.
    The remaining panels show the residuals of PINNs' results using uniformly random sampling, uniformly spaced sampling, LHS, Sobol sequence, and GLT, from left to right.
    The residuals are multiplied by the factors in parentheses.
  }
  \label{fig:PINNsExample}
\end{figure}

\paragraph{Korteweg--De Vries (KdV) Equation}
A 1D KdV equation is a hyperbolic PDE expressed as
\begin{equation}
  u_t+\lambda_1 uu_x+\lambda_2u_{xxx}=0,
\end{equation}
for the coordinate $\bm{x}=(x,t)$.
The KdV equation is a model of shallow water waves.
We used $\lambda_1=1$ and $\lambda_2=0.0025$, the domain $\Omega=[-1,1]\times[0,1]\ni(x,t)$, and the periodic boundary condition.
The initial condition was $u(x,0)=u_0(x)=\cos(\pi x)$.
For evaluation, the numerical solution $u$ at $N_e=512\times201$ uniformly spaced points were provided.
For the system identification, we treated the parameters $(\theta_1,\theta_2)=(\lambda_1,\lambda_2)$ to be learnable and initialized them to zero.

\paragraph{Allen--Cahn Equation}
A 1D AC equation is a parabolic PDE expressed as
\begin{equation}
  u_t-qu_{xx}+p(u^3-u)=0,
\end{equation}
for the coordinate $\bm{x}=(x,t)$.
The AC equation describes the phase separation of co-polymer melts.
We used $q=0.0001$, $p=5$, the domain $\Omega=[-1,1]\times[0,1]\ni(x,t)$, and the periodic boundary condition.
The initial condition was $u(x,0)=u_0(x)=x^2\cos(\pi x)$.
For evaluation, the numerical solution $u$ at $N_e=512\times201$ uniformly spaced points was provided.
For the system identification, we treated the parameters $(\theta_1,\theta_2)=(q,p)$ to be learnable and initialized them to zero.

\paragraph{Poisson's Equation}
2D Poisson's equation is expressed as
\begin{equation}
  \Delta u+f=0,
\end{equation}
where $\Delta$ is Laplacian and $f$ is a function of coordinates, state, and its first-order derivatives.
It is elliptic if the term $f$ is linear.
The case of $s=2$ dimensions with the coordinate $\bm{x}=(x,y)$ is expressed as
\begin{equation}
  u_{xx}+u_{yy}+f(x,y,u_x,u_y)=0.
\end{equation}
For the coordinate $\bm{x}=(x_1,\dots,x_s)$ in general, we used the domain $\Omega=[0,1]^s$, the Dirichlet boundary condition $u(\bm{x})=0$ at $\partial\Omega$, and $f(\bm{x})=\prod_{k=1}^s\sin(k\pi x_k)$.
Then, we obtained the analytic solution as $u(\bm{x})=\prod_{k=1}^s\sin(k\pi x_k)/(-sk^2\pi^2)$; there are $k^s$ peaks or valleys.
We set $k=2$ in this paper.
To ensure the boundary condition, we multiplied the output $\tilde u$ of the neural network by $\prod_{k=1}^sx_k(x_k-1)$.
For evaluation, we obtained the analytical solution $u$ at collocation points determined by the uniform spaced sampling, whose number is $N=999^2$, $49^4$, $13^6$, and $7^8$ for $s=2$, $4$, $6$, and $8$-dimensional cases, respectively.

\paragraph{Burgers' Equation}
1D Burgers' equation is expressed as
\begin{equation*}
  u_t+uu_x-\nu u_{xx}=0
\end{equation*}
for the coordinate $\bm{x}=(x,t)$.
Burgers' equation is a convection-diffusion equation describing a nonlinear wave.
We used $\nu=(0.01/\pi)$, the domain $\Omega=[-1,1]\times[0,1]\ni(x,t)$, the Dirichlet boundary condition $u(-1,t)=t(1,t)=0$, and the initial condition $u(x,0)=u_0(x)=-\sin(\pi x)$.
For evaluation, the numerical solution $u$ at $N_e=256\times 100$ uniformly spaced points was provided.
From these points, we used $N_0=256$ and $N_b=100$ collocation points for learning the initial condition and boundary condition, respectively.
The loss functions for these conditions were identical to those for the NLS equation.

\section{Additional Results}
\subsection{Example Results}
In this section, we show the true solutions and the residuals of example results with such $N$ in Fig.~\ref{fig:PINNsExample}.
We can see that the proposed GLT yielded the smallest errors.

\begin{figure}[t]
  \tabcolsep=1mm
  \centering
  \scriptsize
  \begin{tabular}{cc}
    \includegraphics[scale=0.9]{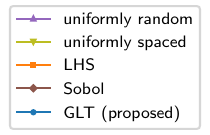}                     &
    \hspace*{-2mm}\includegraphics[scale=0.8]{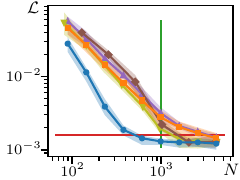}   \\[-1.0mm]
                                                                           &
    \hspace*{4mm} NLS                                                        \\[0.5mm]
    \hspace*{-2mm}\includegraphics[scale=0.8]{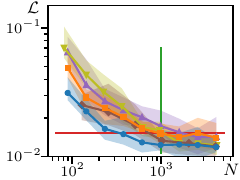} &
    \hspace*{-2mm}\includegraphics[scale=0.8]{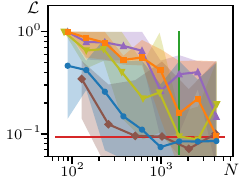}    \\[-1.0mm]
    \hspace*{4mm} KdV                                                      &
    \hspace*{4mm} AC                                                         \\[0.5mm]
  \end{tabular}
  \caption{The number $N$ of collocation points and the relative error $\mathcal L$.}
  \label{fig:PINNsResultsLearning}
\end{figure}

\begin{table}[t]
  \centering
  \scriptsize
  \caption{Trade-Off between Number $N$ of Collocation Points and Relative Error $\mathcal L$.}
  \label{tab:datasets_learning}
  \small
  \begin{tabular}{c@{ }lrrr}
    \toprule
                                                                                     &                      & \multicolumn{3}{c}{relative error $\mathcal L^\ddagger$}                                                    \\
    \cmidrule(lr){3-5}
                                                                                     &                      & \multicolumn{1}{c}{NLS}                                  & \multicolumn{1}{c}{KdV} & \multicolumn{1}{c}{AC} \\
    \midrule
    \scalebox{1.0}[0.8]{\textcolor{C4}{$\blacktriangle$}}                            & uniformly random     & 3.18                                                     & 17.30                   & 382.51                 \\
    \rotatebox[origin=c]{180}{\scalebox{1.0}[0.8]{\textcolor{C8}{$\blacktriangle$}}} & uniformly spaced     & 1.98                                                     & 16.08                   & 94.33                  \\
    \scalebox{0.7}{\textcolor{C1}{$\blacksquare$}}                                   & LHS                  & 2.78                                                     & 15.14                   & 158.71                 \\
    \rotatebox[origin=c]{45}{\scalebox{0.6}{\textcolor{C5}{$\blacksquare$}}}         & Sobol                & 2.21                                                     & 13.28                   & 94.35                  \\
    \textcolor{C0}{$\bullet$}                                                        & GLT & \textbf{1.31}                                            & \textbf{12.30}          & \textbf{84.50}         \\
    \midrule
    \textcolor{C0}{$\bullet$}                                                        & GLT (with tricks)    & \textbf{1.22}                                            & \textbf{2.19}           & \textbf{0.93}          \\
    \bottomrule                                                                                                                                                                                                           \\[-3mm]
  \end{tabular}\\
  \raggedright
  $\ddagger$ relative error $\mathcal L$ at competitive \# of points $N$ (on vertical green line in Fig.~\ref{fig:PINNsResultsLearning}). Shown in the scale of $10^{-3}$.\\
\end{table}

\subsection{Without Periodization Tricks}
We have proposed periodization tricks that ensure the periodicity assumption of the integrand and the initial and boundary conditions.
In this section, we examined the scenarios where these tricks were not applied.
We presented the results in Fig.~\ref{fig:PINNsResultsLearning} and Table~\ref{tab:datasets_learning} in the same way as Fig.~\ref{fig:PINNsResultsLearning} and Table~\ref{tab:datasets_learning}.
We can observe that the proposed GLT surpasses all other sampling methods, with a few exceptions at certain values of $N$ in the AC equation.

Most importantly, the relative error $\mathcal L$ significantly deteriorated for all sampling methods when these tricks were not applied.
Ensuring the initial and boundary conditions that include periodicity, rather than learning them, is crucial to achieving high performance, and is also effective in all sampling methods.
On the other hand, while the proposed GLT assumes the periodicity of the loss function in theory, it is not necessarily essential in practice.

\begin{figure}[t]
  \tabcolsep=1mm
  \centering
  \footnotesize
  \hspace*{-2mm}\includegraphics[scale=0.8]{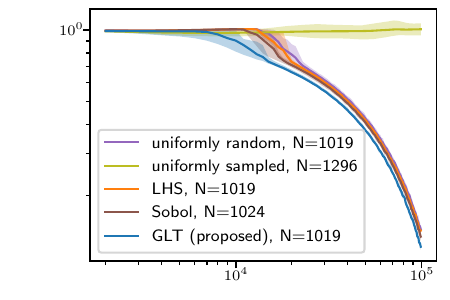}   \\[-3.0mm]
  \hspace*{4mm} Poisson's equation ($s=4$)                                     \\[0.5mm]
  \hspace*{-2mm}\includegraphics[scale=0.8]{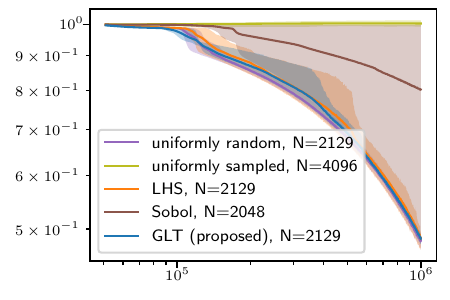}   \\[-3.0mm]
  \hspace*{4mm} Poisson's equation ($s=6$)                                     \\[0.5mm]
  \hspace*{-2mm}\includegraphics[scale=0.8]{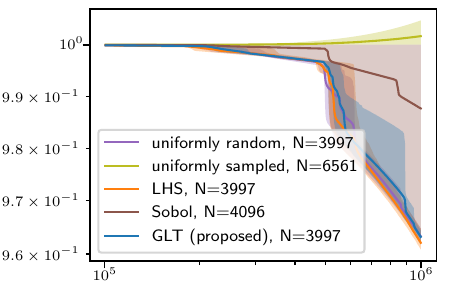}   \\[-3.0mm]
  \hspace*{4mm} Poisson's equation ($s=8$)                                     \\[0.5mm]
  \caption{ The number of iterations and the relative error $\mathcal L$.}
  \label{fig:HigherDim}
\end{figure}

\subsection{Higher Dimensional Case}
We assessed the proposed GLT and other comparison methods using a higher-dimensional Poisson's equation.
Our findings indicate that for dimensions $s\ge 6$, the solution did not converge within a reasonable time.
Therefore, we employed a similar number of collocation points, set the learning rate at $10^{-4}$, and summarized the resultant relative errors $\mathcal L$ in relation to the number of iterations in Fig.~\ref{fig:HigherDim}.
The solid lines represent the average relative error $\mathcal L$ across five trials, while the shaded areas show the range of maximum and minimum errors.
The solid lines denote the relative error $\mathcal L$ averaged over five trials, and the shaded areas denote the maximum and minimum errors.

The uniformly spaced sampling was ineffective due to its discretization error of $O(1/N^{\frac{1}{s}})$, which becomes more significant in higher dimensions.
At $s=4$, the proposed GLT most effectively reduced the relative error $\mathcal L$, followed by the Sobol sequence.
However, when the dimension number was increased to $s=6$ or $s=8$, the uniformly random sampling (i.e., the Monte Carlo method) proved to be as effective as, or superior to, the proposed GLT method.
This is because the Monte Carlo method produces the discretization error of $O(1/N^{\frac{1}{2}})$, which remains constant regardless of the dimension number $s$, in contrast to the proposed GLT's discretization error of $O(\frac{(\log N)^{\alpha s}}{N^\alpha})$, which is sensitive to the dimension number $s$.

Despite these variations, the performance difference was minimal, and the proposed GLT still demonstrated some effectiveness in higher dimensions.
Conversely, approximately half of the Sobol sequence trials showed no progress.
While the exact reasons are unclear, we observed that the Sobol sequence sometimes failed to approximate the integrated loss accurately in the early stages of training, preventing further progress.

In conclusion, the proposed GLT is the preferred option for simulations in four or fewer dimensions, as commonly required in most physical simulations.
While not always the optimal choice in higher dimensions, it maintains a certain level of effectiveness.
This represents an advantage over other quasi-Monte Carlo methods, such as the Sobol sequence.
Moreover, PINNs struggled to efficiently learn high-dimensional PDEs of six or more dimensions with realistic time and precision.
Therefore, beyond selecting collocation points, further innovation is necessary.

\end{document}